\def\eqref#1{equation~\ref{#1}}
\def\1{\bm{1}}
\DeclareMathAlphabet{\mathsfit}{\encodingdefault}{\sfdefault}{m}{sl}
\SetMathAlphabet{\mathsfit}{bold}{\encodingdefault}{\sfdefault}{bx}{n}
\title{Differentially Private Synthetic Data: \\ Applied Evaluations and Enhancements}
\newtheorem{theorem}{Theorem}[section]
\begin{document}

\maketitle

\begin{abstract}
Machine learning practitioners frequently seek to leverage the most informative available data, without violating the data owner's privacy, when building predictive models. Differentially private data synthesis protects personal details from exposure, and allows for the training of differentially private machine learning models on privately generated datasets. But how can we effectively assess the efficacy of differentially private synthetic data? In this paper, we survey four differentially private generative adversarial networks for data synthesis. We evaluate each of them at scale on five standard tabular datasets, and in two applied industry scenarios. We benchmark with novel metrics from recent literature and other standard machine learning tools. Our results suggest some synthesizers are more applicable for different privacy budgets, and we further demonstrate complicating domain-based tradeoffs in selecting an approach. We offer experimental learning on applied machine learning scenarios with private internal data to researchers and practioners alike. In addition, we propose QUAIL, an ensemble-based modeling approach to generating synthetic data. We examine QUAIL's tradeoffs, and note circumstances in which it outperforms baseline differentially private supervised learning models under the same budget constraint.
\end{abstract}

\section{Introduction}
\label{section:Introduction}
Maintaining an individual's privacy is a major concern when collecting sensitive information from groups or organizations. A formalization of privacy, known as differential privacy, has become the gold standard with which to protect information from malicious agents \citep{dwork2008survey}. Differential privacy offers some of the most stringent known theoretical privacy guarantees \citep{dwork2014algorithmic}. Intuitively, for some query on some dataset, a differentially private algorithm produces an output, regulated by a privacy parameter $\epsilon$, that is statistically indistinguishable from the same query on the same dataset had any one individual's information been removed. This powerful tool has been adopted by researchers and industry leaders, and has become particularly interesting to machine learning practitioners, who hope to leverage privatized data in training predictive models \citep{ji2014differential, vietri2020new}.

Because differential privacy often depends on adding noise, the results of differentially private algorithms can come at the cost of data accuracy and utility. However, differentially private machine learning algorithms have shown promise across a number of domains. These algorithms can provide tight privacy guarantees while still producing accurate predictions \citep{abadi2016deep}. A drawback to most methods, however, is in the one-off nature of training: once the model is produced, the privacy budget for a real dataset can be entirely consumed. The differentially private model is therefore inflexible to retraining and difficult to share/verify: the output model is a black box. 

This can be especially disadvantageous in the presence of high dimensional data that require rigorous training  techniques like dimensionality reduction or feature selection \citep{hay2016principled}. With limited budget to spend, data scientists cannot exercise free range over a dataset, thus sacrificing model quality. In an effort to remedy this, and other challenges faced by traditional differentially private methods for querying, we can use differentially private techniques for synthetic data generation, investigate the privatized data, and train informed supervised learning models.

In order to use the many state-of-the-art methods for differentially private synthetic data effectively in industry domains, we must first address pitfalls in practical analysis, such as the lack of realistic benchmarking \citep{arnold2020really}. Benchmarking is non-trivial, as many new state-of-the-art differentially private synthetic data algorithms leverage generative adversarial networks (GANs), making them expensive to evaluate on large scale datasets \citep{zhao2019differential}. Furthermore, many of state-of-the-art approaches lack direct comparisons to one another, and by nature of the privatization mechanisms, interpreting experimental results is non-trivial \citep{jayaraman2019evaluating}. New metrics presented to analyze differentially private synthetic data methods may themselves need more work to understand, especially in the domain of tabular data \citep{ruggles2019differential, machanavajjhala2017differential}.

To that end, our contributions in this paper are 3-fold. (1) We introduce more realisitic benchmarking. Practitioners commonly collect state-of-the-art approaches for comparison in a shared environment \citep{xuctgan2019}. We provide our evaluation framework, with extensive comparisons on both standard datasets and our real-world, industry applications. (2) We provide experimentation on novel metrics at scale. We stress the tradeoff between synthetic data \textit{utility} and \textit{statistical similarity}, and offer guidelines for untried data. (3) We present a straightforward and pragmatic enhancement, QUAIL, that addresses the tradeoff between \textit{utility} and \textit{statistical similarity}. QUAIL's simple modification to a differentially private data synthesis architecture boosts synthetic data utility in machine learning scenarios without harming summary statistics or privacy guarantees.

\section{Background}
\label{section:Background}
Differential Privacy (DP) is a formal  definition of privacy offering strong assurances against various re-identification and re-construction attacks \citep{dwork2006calibrating,dwork2014algorithmic}. In the last decade, DP has attracted significant attention due to its provable privacy guarantees and ability to quantify privacy loss, as well as unique properties such as robustness to auxiliary information, composability enabling modular design, and group privacy \citep{dwork2014algorithmic,abadi2016deep}

\textbf{Definition 1.} (Differential Privacy~\cite{dwork2006calibrating}) A randomized function $\mathcal{K}$ provides $(\epsilon,\delta)$-differential privacy if $ \forall S \subseteq Range(\mathcal{K})$, all neighboring datasets $D$, $\hat{D}$ differing on a single entry,
\begin{equation}\label{dp}
\Pr[\mathcal{K}(D) \in S] \leq e^\epsilon \cdot \Pr[\mathcal{K}(\hat{D}) \in S]+\delta,
\end{equation}

This is a standard definition of DP, implying that the outputs of differentially private algorithm for datasets that vary by a single individual are indistinguishable, bounded by the privacy parameter $\epsilon$. Here, $\epsilon$ is a non-negative number otherwise known as the \textit{privacy budget}. Smaller $\epsilon$ values more rigorously enforce privacy, but often decrease data utility. An important property of DP is its resistance to post-processing. Given an $(\epsilon,\delta)$-differentially private algorithm $\mathcal{K} : \mathcal{D} \to \mathcal{O}$, and $f: \mathcal{O} \to \mathcal{O\acute{}}$ an arbitrary randomized mapping, $f \circ \mathcal{K}: \mathcal{D}\to O\acute{}$ is also differentially private.

Currently, the widespread accessibility of data has increased data protection and privacy regulations, leading to a surge of research into applied scenarios for differential privacy (\cite{allen2019algorithmic,ding2017collecting, doudalis2017one}. There have been several studies into protecting individual's privacy during model training~\cite{li2014privacy,zhang2015differential,feldman2018privacy}. In particular, several studies have attempted to solve the problem of preserving privacy in deep learning (\cite{phan2017preserving,abadi2016deep,shokri2015privacy,xie2018differentially,zhang2018differentially,jordon2018pate,torkzadehmahani2019dp}). Here, two main techniques for training models with differential privacy are discussed:

\textbf{DP-SGD}
Differentially Private Stochastic Gradient Descent (DP-SGD), proposed by~\cite{abadi2016deep}, is one of the first studies to make the Stochastic Gradient Descent (SGD) computation differential private. Intuitively, DPSGD minimizes its loss function while preserving differential privacy by clipping the gradient in the optimization's $l_2$ norm to reduce the model's sensitivity, and adding noise to protect privacy. Further details can be found in the Appendix.

\textbf{PATE}
Private Aggregation of Teacher Ensembles (PATE) \citet{papernot2016semi} provided PATE, which functions by first deploying multiple teacher models that are trained on disjoint datasets, then deploying the teacher models on unseen data to make predictions. On unseen data, the teacher models ``vote'' to determine the label; here random noise is introduced to privatize the results of the vote. The random noise is generated following the Laplace $Lap(\lambda)$ distribution. PATE further introduces student models, which try to train a model, but only have access to the privatized labels garnered from the teacher's vote.
By training multiple teachers on disjoint datasets and adding noise to the output predicted by those teacher models, the student cannot relearn an individual teacher's model or related parameters. 

\subsection{Privacy Preserving Synthetic Data Models}
Synthetic data generation techniques, such as generative adversarial networks (GANs) (\cite{goodfellow2014generative,arjovsky2017wasserstein,xuctgan2019}), have become a practical way to release realistic fake data for various explorations and analyses. Although these techniques are able to generate high-quality fake data, they may also reveal user sensitive information and are vulnerable to re-identification and/or membership attacks (\cite{hayes2019logan,hitaj2017deep,chen2019gan}). Therefore, in the interest of data protection, these techniques must be formally privatized. In recent years, researchers have combined data synthesis methods with DP solutions to allow for the release of data with high utility while preserving an individual's privacy (~\cite{xie2018differentially,jordon2018pate,park2018data,mukherjee2019protecting}). Below, we briefly discuss three popular differentially private data synthesizers, evaluated in this paper.

\textbf{MWEM}
Multiplicative Weights Exponential Mechanism (MWEM) proposed by~\cite{hardt2012simple} is a simple yet effective technique for releasing differentially private datasets. It combines Multiplicative Weights \citep{hardt2010multiplicative} with the Exponential Mechanism \citep{mcsherry2007mechanism}  to achieve differential privacy. The Exponential Mechanism is a popular mechanism for designing $\epsilon$-differentially private algorithms that select for a best set of results $R$ using a scoring function $s(B,r)$. Informally, $s(B,r)$ can be thought of as the quality of a result $r$ for a dataset $B$. MWEM starts with a dataset approximation and uses the Multiplicative Weights update rule to improve the accuracy of the approximating distribution by selecting for informative queries using the Exponential Mechanism. This process of updates iteratively improves the approximation. 

\textbf{DPGAN}
Following~\cite{abadi2016deep}'s work, a number of studies utilized DP-SGD and GANs to generate differential private synthetic data \citep{xie2018differentially,torkzadehmahani2019dp, xu2018dp}. These models inject noise to the GAN's discriminator during training to enforce differential privacy. DP's guarantee of post-processing privacy means that privatizing the GAN's discriminator enforces differential privacy on the parameters of the GAN's generator, as the GAN's mapping function between the two functions does not involve any private data. We use the Differentially Private Generative Adversarial Network (DPGAN)~\cite{xie2018differentially} as one of our benchmark synthesizers. DPGAN leverages the Wasserstein GAN proposed by~\cite{arjovsky2017wasserstein}, adds noise on the gradients, and clips the model weights only, ensuring the Lipschitz property of the network. DPGAN has been evaluated on image data and Electronic Health Records (EHR) in the past. 

\textbf{PATE-GAN}
\cite{jordon2018pate} modified the Private Aggregation of Teacher Ensembles (PATE) framework to apply to GANs in order to preserve the differential privacy of synthetic data. Similarly to DPGAN, PATE-GAN only applies the PATE mechanism to the discriminator. The dataset is first partitioned into $k$ subsets, and $k$ teacher discriminators are initialized. Each teacher discriminator is trained to discriminate between a subset of the original data and fake data generated by Generator. The student discriminators are then trained to distinguish real data and fake data using the labels generated by an ensemble of teacher discriminators with random noise added. Lastly, the generator is trained to fool the student discriminator. \cite{jordon2018pate} claim that this method outperforms DPGAN for classification tasks, and present supporting results. 
\section{Enhancing Performance}
\label{section:QUAIL}
\textbf{The QUAIL Ensemble Method} As we explored generating differentially private synthetic data, we noted a disconnect between the distribution of epsilon, or privacy budget, and the algorithm's application. Generating synthetic data to provide summary statistics necessitates an even distribution of budget across the entire privatization effort; we cannot know a user's query in advance. We may want to reallocate the budget, however, for a known supervised learning task.

QUAIL (Quail-ified Architecture to Improve Learning) is a simple, ensemble model approach to enhancing the utility of a differentially private synthetic dataset for machine learning tasks. Intuitively, QUAIL assembles a DP supervised learning model in tandem with a DP synthetic data model to produce synthetic data with machine learning potential. \textit{Algorithm 1} describes the procedure more formally. 


\begin{algorithm}[h]
\KwIn{Dataset $D$, supervised learning target dimension $r'$,  budget $\epsilon > 0$, split factor $0 < p < 1$, size $n$ samples to generate, a differentially private synthesizer $M(D,\epsilon)$, and a differentially private supervised learning model $C(D,\epsilon,t)$ ($t$ is supervisory signal i.e. target dimension),}

\nl \bf Split\; \rm  Split the budget: $\epsilon_M = \epsilon * p$ and $\epsilon_C = 1 - (\epsilon * p)$.

Create $D_M$, which is identical to $D$ except $r' \not\in D_M$.

\nl \bf In parallel\; \rm  
\begin{itemize}
    \item Train differentially private supervised learning model: $C(D,\epsilon_C,r')$ to produce $C_{r'}(s) : N^{|X|} \rightarrow R_1$, which can map any arbitrary $s \in N^{|X|}$ to an output label.
    \item Train differentially private synthesizer: $M(D_M,\epsilon_M): N^{|X|} \rightarrow R_2$ to produce synthesizer $M_{D_M}$, which produces synthetic data $S \in N^{|X|}$.
\end{itemize}

\nl \bf Sample\; \rm 
\begin{enumerate}
    \item Using $M_{D_M}$, generate synthetic dataset $S_{D_M}$ with $n$ samples.
    \item For each sample $s_i \in S_{D_M}$, apply $C_{r'}(s_i) = r_i$  i.e. apply model to each synthetic datapoint to produce a supervised learning output $r_i$.
    \item Transform $S_{D_M} \rightarrow S_R$. For each row $s_i \in S_{D_M}$, $s_i = [s_i, r_i]$ s.t. $\forall s_i, s_i \in dom(D)$ i.e. append $r_i$ to each row $s_i$ so that $S_R$ is now in same domain as $D$, the original dataset.
\end{enumerate}

\KwOut{Return $S_R$, a synthetic dataset with $n$ samples, where each sample in $S_R$ has target dimension $r_i$ produced by the supervised learner $C_{r'}$}

    \caption{{\bf QUAIL pseudocode} \label{Algorithm}}

\end{algorithm}

\begin{theorem}[QUAIL follows the standard composition theorem for $(\epsilon, \delta)$-differential privacy] The QUAIL method preserves the differential privacy guarantees of $C(R,\epsilon_C,r')$ and $M(R_M,\epsilon_M)$ by the standard composition rules of differential privacy \citep{dwork2014algorithmic}.
\begin{proof}
Let the first $(\epsilon, \delta)$-differentially private mechanism $M_1: N^{|X|} \rightarrow R_1$ be $C(R,\epsilon_C,r')$. Let the second $(\epsilon, \delta)$-differentially private mechanism $M_2: N^{|X|} \rightarrow R_2$ be $M(R_M,\epsilon_M)$. Fix $0 < p < 1$, $\epsilon_M = p * \epsilon$ and $\epsilon_C = (1-p) * \epsilon$, then by construction, $\frac{Pr[M_1(x)=(r_1,r_2)]}{Pr[M_2(y)=(r_1,r_2)]}\geq exp(-(\epsilon_M + \epsilon_C))$, which satisfies the differential privacy constraints for a privacy budget of $\epsilon_M + \epsilon_C=\epsilon_{total}$. For more details, see the appendix.
\end{proof}
\end{theorem}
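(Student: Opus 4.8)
The plan is to split QUAIL's end-to-end production of $S_R$ into two stages --- a composition of the two given differentially private mechanisms, followed by a data-independent post-processing step --- and then invoke the basic composition theorem and the post-processing invariance of differential privacy, both from \citet{dwork2014algorithmic}.

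First I would name the mechanisms: let $M_1 := C(\cdot,\epsilon_C,r')$, which is $(\epsilon_C,\delta_C)$-DP, and $M_2 := M(\cdot,\epsilon_M)$, which is $(\epsilon_M,\delta_M)$-DP, both by hypothesis. Both are run on the same private dataset $D$ --- note that $D_M$ retains every row of $D$ and merely drops the column $r'$, so the two mechanisms touch the same individuals and this is a (sequential, non-adaptive) composition rather than a parallel composition over a partition. The ``in parallel'' scheduling and the use of independent internal randomness are irrelevant to the accounting; what is released is the pair $(M_1(D),M_2(D)) = (C_{r'}, M_{D_M})$. By the basic composition theorem this pair is $(\epsilon_C + \epsilon_M,\ \delta_C + \delta_M)$-differentially private, and since $\epsilon_C + \epsilon_M = (1-p)\epsilon + p\epsilon = \epsilon$, the released pair of trained models is $(\epsilon,\ \delta_C+\delta_M)$-DP.

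Next I would verify that the \textbf{Sample} stage is a randomized function of $(C_{r'}, M_{D_M})$ alone: generating $S_{D_M}$ uses only the generator $M_{D_M}$ together with fresh randomness; labeling each $s_i$ by $r_i = C_{r'}(s_i)$ uses only the released classifier and the (already synthetic) points $s_i$; and concatenating to form $S_R$ is deterministic. Crucially, none of these operations reads $D$ again. Hence $S_R = g\big(M_1(D), M_2(D)\big)$ for a randomized map $g$ that does not depend on $D$, and post-processing invariance yields that $S_R$ is $(\epsilon,\ \delta_C + \delta_M)$-DP --- i.e. QUAIL consumes exactly the allotted budget $\epsilon$ and no more.

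I expect the one point that actually needs care --- rather than the composition bookkeeping, which is routine --- is the claim that \textbf{Sample} is genuinely data-independent post-processing: one must check that $C_{r'}$ is only ever evaluated on synthetic records drawn from $S_{D_M}$ and never on rows of $D$, and that the column $r'$ removed to form $D_M$ re-enters the pipeline solely through the post-hoc predictions $r_i$ and through no other channel. Once that is pinned down the conclusion is immediate. I would also note that the likelihood-ratio inequality sketched in the appendix is merely the $\delta = 0$ specialization of basic composition, and that citing composition and post-processing directly is cleaner than manipulating ratios, since for $\delta > 0$ the pointwise ratio bound does not on its own encode the $(\epsilon,\delta)$ guarantee.
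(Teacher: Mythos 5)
Your proposal is correct and rests on the same backbone as the paper's own proof --- basic sequential composition of $M_1 = C(\cdot,\epsilon_C,r')$ and $M_2 = M(\cdot,\epsilon_M)$ with $\epsilon_C+\epsilon_M=\epsilon$ --- but it is a more complete version of that argument in two respects. First, you explicitly invoke post-processing invariance to cover the \textbf{Sample} stage (drawing $S_{D_M}$ from the generator, labeling each $s_i$ with $C_{r'}$, and concatenating to form $S_R$), checking that this stage is a randomized map of the two released mechanisms' outputs that never reads $D$ again; the paper's proof stops at the composition of the two trained mechanisms and leaves the release of $S_R$ to be covered only implicitly by the post-processing remark in its Background section. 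Your flagged caveat --- that $C_{r'}$ must only ever be evaluated on synthetic records and that the dropped column $r'$ re-enters only through the predictions $r_i$ --- is precisely the condition that legitimizes this step, and the paper never states it. Second, your $\delta$-accounting is more careful: the theorem is phrased for $(\epsilon,\delta)$-DP, but the paper's argument (inline and in the appendix) manipulates pointwise likelihood ratios, which is the $\delta=0$ specialization and does not by itself yield the approximate-DP guarantee; your $(\epsilon_C+\epsilon_M,\ \delta_C+\delta_M)$ bookkeeping is the right statement. You also implicitly avoid two slips in the paper's displayed inequality, which compares $M_1(x)$ against $M_2(y)$ rather than the joint mechanism on neighboring inputs (corrected in the appendix version) and records only the lower-bound direction of the ratio. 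What the paper's route buys is brevity; what yours buys is an argument that actually covers the object QUAIL releases.
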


QUAIL's hyperparameter, the split factor $p$ where $0 < p < 1$, determines the distribution of budget between classifier and synthesizer. We generated classification task datasets with 10000-50000 samples, 7 feature columns and 10 output classes using the $make\_classification$ package from Scikit-learn \citep{pedregosa2011scikit}. We experimented with the values $p = [0.1, 0.3, 0.5, 0.7, 0.9]$, and report on results, varying budget $\epsilon=[1.0,3.0,10.0]$. See the appendix for complete results and a list of DP classifiers we experimented on embedding in QUAIL.

Our figures represent the delta $\delta$ in F1 score between training the classifier $C(R,\epsilon_C,r')$ on the original dataset (the ``vanilla'' scenario) ($\text{F1}_v$), and training a Random Forest classifier on the differentially private synthetic dataset produced by applying QUAIL to an ensemble of $C(R,\epsilon_C,r')$ and one of our benchmark synthesizers $M(D,\epsilon_M)$ ($\text{F1}_q$). We plot $\delta = \text{F1}_v - \text{F1}_q$ across epsilon splits and datasizes. Positive to highly positive deltas are grey$\rightarrow$red, indicating the ``vanilla'' scenario outperformed the QUAIL scenario. Small or negative deltas are blue, indicating the QUAIL scenario matched, or even outperformed, the ``vanilla'' scenario. Each cell contains $\delta$ for some $p$ on datasets $|10000-50000|$. In our results we use DP Gaussian Naive Bayes (DP-GNB) as $C(R,\epsilon_C,r')$ ($\text{F1}_v$), and trained a Random Forest Classifier on data generated by QUAIL ($\text{F1}_q$) (recall QUAIL ensembles $C(R,\epsilon_C,r')$ and a DP synthesizer) \citep{vaidya2013differentially, diffprivlib}. We average across 75 runs.

 Note the correlation between epsilon split, datasize and classification performance when embedding PATECTGAN in QUAIL, shown in Figure \ref{fig:quail_patectgan}, suggesting that a higher $p$ split value increases the likelihood of outperforming $C(R,\epsilon_C,r')$. For an embedded MWEM synthesizer, seen in Figure \ref{fig:quail_mwem}, the relationship between split, scale and performance was more ambiguous. In general, a higher split factor $p$, which assigns more budget to the differentially private classifier $C(D,\epsilon_M,t)$ could improve the utility of the overall synthetic dataset. However, any perceived improvements were highly dependant on the differentially private synthesizer used. Our QUAIL results are agnostic to the embedded supervised learning algorithm $C(R,\epsilon_C,r')$, as they depict relative performance, though different methods of supervised learning are more suitable to certain domains. Future work might explore alternative classifiers or regression models, and how purposefully \textit{overfitting} the model $C(R,\epsilon_C,r')$ could contribute to improved synthetic data.
 
 \begin{figure}[h]
\centering
\begin{minipage}[b]{0.45\linewidth}
  \includegraphics[width=1.0\linewidth]{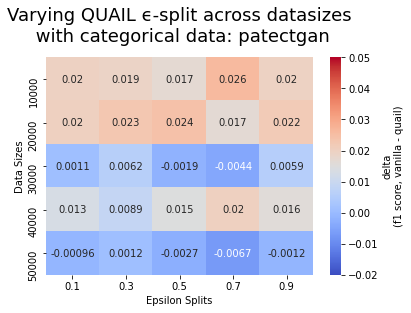}
  \caption{Privacy budget $\epsilon = 3.0$}
  \label{fig:quail_patectgan}
  \end{minipage}
  \begin{minipage}[b]{0.45\linewidth}
\includegraphics[width=1.0\linewidth]{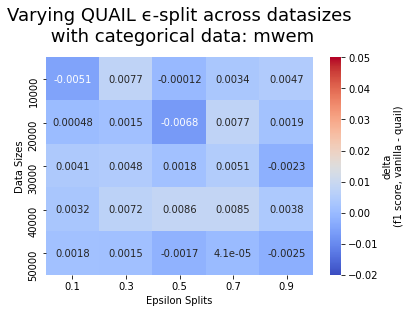}
  \caption{Privacy budget $\epsilon = 3.0$}
  \label{fig:quail_mwem}
  \end{minipage}
\end{figure}

\label{section:CTGAN}
\textbf{Differentially Private GANs for Tabular Data} In this paper, we focus on \textit{tabular} synthetic data, and explored state-of-the-art methods for generating tabular data with GANs. CTGAN is a state-of-the-art GAN for generating tabular data presented by \citet{xuctgan2019}. We made CTGAN differentially private using the aforementioned techniques, DP-SGD and PATE. CTGAN addresses specific challenges that a vanilla GAN faces when generating tabular data, such as mode-collapse and continuous data following a non-Gaussian distribution \citep{xuctgan2019}. To model continuous data with multi-model distributions, it leverages mode-specific normalization. In addition, CTGAN introduces a conditional generator, which can generate synthetic rows conditioned by specific discrete columns. CTGAN  further trains by sampling, which explores discrete values more evenly.

\textit{DP-CTGAN}
Inspired by \cite{xie2018differentially}'s DPGAN work, we applied DP-SGD to the CTGAN architecture (details can be found in Figure \ref{fig:dpctgan} in the Appendix). Similarly to DPGAN, in applying DP-SGD to CTGAN we add random noise to the discriminator and clip the norm to make it differentially private. See Figure \ref{fig:dpctgan} in Appendix for a diagram.

\textit{PATE-CTGAN}
Drawing from work on PATE-GAN, we applied the PATE framework to CTGAN \citep{jordon2018pate}. Similarly to PATE-GAN, we partitioned our original dataset into $k$ subsets and trained $k$ differentially private teacher discriminators to distinguish real and fake data. In order to apply the PATE framework, we further modified CTGAN's teacher discriminator training: instead of using one generator to generate samples, we initialize $k$ conditional generators for each subset of data (shown in Figure \ref{fig:patectgan} in the appendix). 
\section{Evaluation: Metrics, Infrastructure and Public Benchmarks}
\label{section:Metrics}
We focus on two sets of metrics in our benchmarks: one for comparing the distributional similarity of two datasets and another for comparing the utility of synthetic datasets given a specific predictive task. These two dimensions should be viewed as complementary, and in tandem they capture the overall quality of the synthetic data.

\textit{Distributional similarity} To provide a quantitative measure for comparison of synthetically generated datasets, we use a relatively new metric for assessing synthetic data quality: \textit{propensity score mean-squared error (pMSE) ratio score}. Proposed by \cite{snoke2018pmse}, pMSE provides a statistic to capture the distributional similarity between two datasets. Given two datasets, we combine the two together with an indicator to label which set a specific observation comes from. A discriminator is then trained to predict these indicator labels. To calculate pMSE, we simply compute the mean-squared error of the predicted probabilities for this classification task. If our model is unable to discern between these classes, then the two datasets are said to have high distributional similarity. To help limit the sensitivity of this metric to outliers,  \cite{snoke2018pmse} propose transforming pMSE to a ratio by leveraging an approximation to the null distribution. For the ratio, we simply divide the pMSE by the expectation of the null distribution. A ratio score of 0 implies the two datasets are identical.

\textit{Machine Learning Utility} Given the context of this paper, we aim to provide quantitative measures for approximating the utility of differentially private synthetic data in regards to machine learning tasks. Specifically, we used three metrics: \textit{AUCROC} and \textit{F1-score}, two traditional utility measures, and the \textit{synthetic ranking agreement (SRA)}, a more recent measure. SRA can be thought of as the probability that a comparison between any two algorithms on the synthetic data will be similar to comparisons of the same two algorithms on the real data \citep{jordon2018measuring}. Descriptions of each metric can be found in the Appendix.

\label{section:Evaluation}

\textbf{Evaluation Infrastructure} The design of our pipeline addressed scalability concerns, allowing us to benchmark four computationally expensive GANs on five high dimensional datasets across the privacy budgets $\epsilon = [0.01, 0.1, 0.5, 1.0, 3.0, 6.0, 9.0]$, averaged across 12 runs. We used varying compute, including CPU nodes (24 Cores, 224 GB RAM, 1440 GB Disk) and GPU nodes $GPU$ (4 x NVIDIA Tesla K80). Despite extensive computational resources, we could not adequately address the problem of hyperparameter tuning differentially private algorithms for machine learning tasks, which is an open research problem \citep{liu2019private}. In our case, a grid search was computationally intractable: for each run of the public datasets on all synthesizers, Car averaged 1.27 hours, Mushroom averaged 8.33 hours, Bank averaged 13.30 hours, Adult averaged 14.47 hours and Shopping averaged 27.37 hours. We trained our GANs using the experimentally determined hyperparameters, and were informed by prior work around each algorithm. We include a description of the parameters used for each synthesizer in the appendix. All of our code can be found in the Smartnoise-SDK repository:   \href{https://github.com/opendifferentialprivacy/smartnoise-sdk}{Smartnoise (github)}.

\textit{Regarding F1-score and AUC-ROC}: We averaged across the maximum performance of five classification models: an AdaBoost classifier, a Bagging classifier, a Logistic Regression classifier, Multi-layer Perceptron classifier, and a Random Forest classifier. We decided to focus on one classification scenario specifically: train-synthetic test-real or TSTR, which was far more representative of applied scenarios than train-synthetic test-synthetic. We compare these values to train-real test-real (TRTR).
\label{section:Public}
\begin{figure}[!h]
\centering
  \begin{minipage}[b]{0.45\linewidth}
  \includegraphics[width=1.0\linewidth]{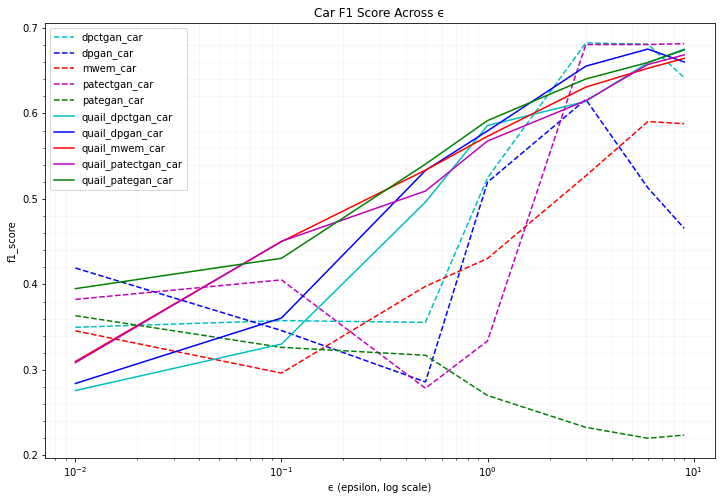}
  \caption{Real Car F1 Score: 0.97}
  \label{fig:f1_score_car}
  \end{minipage}
  \begin{minipage}[b]{0.45\linewidth}
  \includegraphics[width=1.0\linewidth]{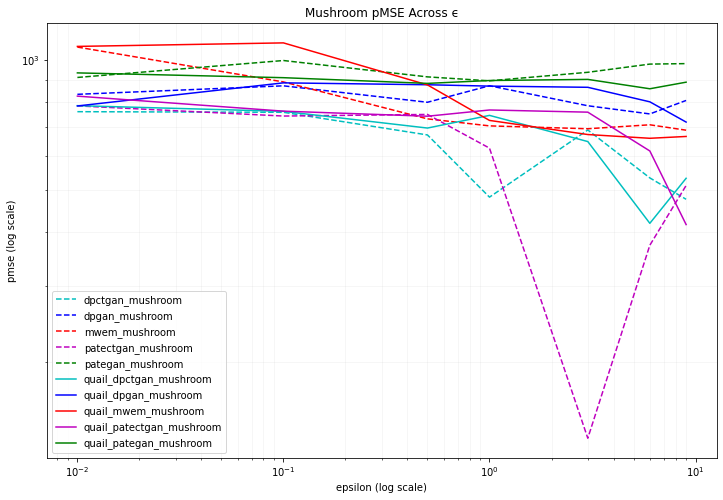}
  \caption{Mushroom pMSE}
  \label{fig:pmse_mushroom}
  \end{minipage}
\end{figure}

\textbf{Experimental Results: Public Datasets} We ran experiments on five public real datasets, which helped inform the applied scenario discussed in Section~\ref{section:Applied}. Full details of the experiments can be found in the Appendix, Figures 25-30. We will refer to the datasets as Adult, Car, Mushroom, Bank and Shopping, and will discuss a handful of the results here in terms of their machine learning \textit{utility} and their \textit{statistical similarity} to the real data. The individual synthesizer's are color coded consistently across plots, and their performance is tracked according to dataset (so ``dpctgan$\_$car'' tracks the graphed metric for DPCTGAN on the Car dataset).

In our Car evaluations in Figure \ref{fig:f1_score_car}, we see strong performance from the QUAIL variants on very low $\epsilon$ values. However, we note that for $\epsilon \geq 3.0$, DPCTGAN and PATECTGAN outperform even the QUAIL enhanced models. We further note that PATECTGAN performs remarkably well on the pMSE metric across $\epsilon$ values in Figure \ref{fig:pmse_car}. In our Mushroom evaluations in Figure \ref{fig:f1_score_mushroom}, QUAIL variants also outperformed other synthesizers. However, PATECTGAN's exhibits the best \textit{statistical similarity} (pMSE score) with larger $\epsilon$. In our evaluations on the Adult dataset in Figure \ref{fig:f1_score_adult}, while PATECTGAN performs well, DPCTGAN performs best when $\epsilon \geq 3.0$. 

Our findings suggest that generally, with larger budgets ($\epsilon \geq 3.0$), PATECTGAN improves on other synthesizers, both in terms of \textit{utility} and \textit{statistical similarity}. With smaller budgets ($\epsilon \leq 1.0$), DPCTGAN may perform better. Synthesizers are not able to achieve reasonable \textit{utility} under low budgets ($\epsilon \leq 0.1$), but DPCTGAN was able to achieve \textit{statistical similarity} in this setting.
\section{Evaluation: Applied Scenario}
\label{section:Applied}
Supported by learnings from experiments on the public datasets, we evaluated our benchmark DP synthesizers on several private internal datasets, for different scenarios such as classification and regression. We show that DP synthetic data models can perform on real-world data, despite a noisy supervised learning problem and skewed distributions when compared to the more standardized public datasets. 

\textbf{Classification} The data used in this set of experiment include {\raise.17ex\hbox{$\scriptstyle\sim$}}100,000 samples and 30 features. The data includes only categorical columns each containing between 2 to 24 categories. One of our tasks with this dataset was to train a classification task with three classes.
We faced significant challenges when managing the long-tail distribution of each feature. Figure~\ref{fig:internal-dist}, which can be found in the appendix, shows an example of data distributions for different attributes in this data. 


We ran our evaluation suite on the applied internal data scenarios to generate the synthetic data from each DP synthesizer and benchmark standard ML models. We also applied a Logistic Regression classifier with differential privacy from IBM. \citep{chaudhuri2011differentially, diffprivlib} to the real data as a baseline. Figure~\ref{fig:internalcls} shows the ML results from our evaluation suite. As expected, as the privacy budget $\epsilon$ increases, performance generally improves. DP-CTGAN had the highest performance without the QUAIL enhancement. QUAIL, however, improved the performance of all synthesizers. In particular, a QUAIL enhanced DPCTGAN synthesizer had the highest performance across epsilons in this experiment. In particular, these experiments demonstrated the advantages of QUAIL, combining DP synthesizers with a DP classifier for a classification model.


\textbf{Regression} In this experiment, we used another internal data for the task of regression. Our dataset included 27466 and 6867 training and testing samples, respectively. The domain comprised eight categorical and 40 continuous features. After generating the DP synthetic data from each model, we used Linear Regression to predict the target variable. Figure~\ref{fig:internalreg} shows the results from the evaluation suite. We used RMSE as the evaluation metric. 
For QUAIL boosting, we used a Linear Regression model with differential privacy from IBM \citep{sheffet2015private, diffprivlib}. We also compared the DP synthesizers with a ``vanilla'' DP Linear Regression (DPLR) using real data. 

In this experiment, PATECTGAN outperformed other models and even improved on the RMSE (root-mean-squared-error) when compared to the real data for budget $\epsilon > 1.0$.
For QUAIL-enhanced models, the RMSE is considerably larger than the real and other DP synthetic data. We attribute this to a weakness of the embedded regression model (DP Linear Regression) in QUAIL for this data scenario. Based on our observations, small privacy budgets ($\epsilon<10.0$) for DP Linear Regression significantly affects its performance. However, as shown in Figure~\ref{fig:internalreg}, we still see some boost on the QUAIL variant synthesizers when compared to the ``vanilla'' DP Linear Regression. For distributional similarity comparison, please refer to Figure~\ref{fig:internal-pms} in the appendix.     

\begin{figure}
\begin{subfigure}{.5\textwidth}
  \centering
  \includegraphics[width=1\linewidth]{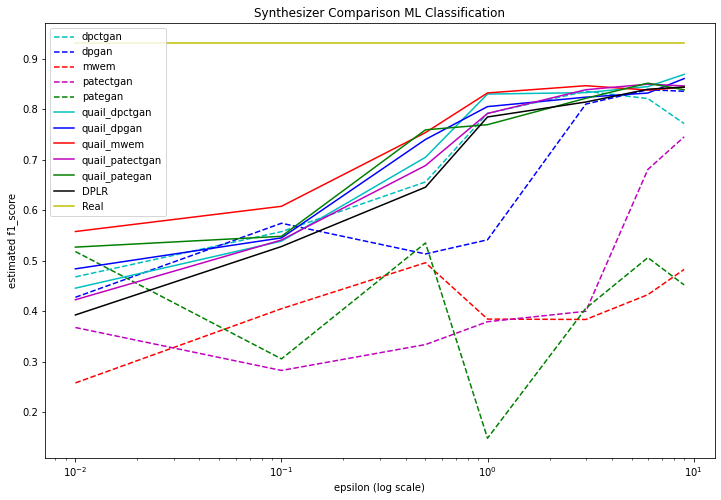}  
  \caption{Classification}
  \label{fig:internalcls}
\end{subfigure}
\begin{subfigure}{.5\textwidth}
  \centering
  \includegraphics[width=1\linewidth]{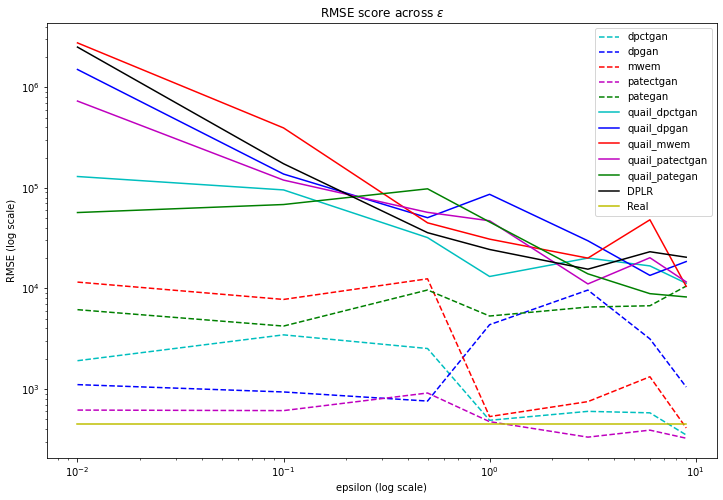}  
  \caption{Regression}
  \label{fig:internalreg}
\end{subfigure}
\caption{ML evaluation results for internal dataset}
\label{fig:fig}
\end{figure}
\section{Punchlines}
\label{section:Punchlines}
We summarize our findings in the following \textit{punchlines},  concise takeaways from our work for researchers and applied practitioners exploring DP synthetic data.

1. Holistic Performance. \textit{No single model performed best always (but PATECTGAN performed well often).} Model performance was domain dependent, with continuous/categorical features, dataset scale and distributional complexity all affecting benchmark results. However, in general, we found that PATECTGAN had better \textit{utility} and \textit{statistical similarity} in scenarios with high privacy budget ($\epsilon>=3.0$) when compared to the other synthesizers we benchmarked. Conversely, with low privacy budget ($\epsilon<=1.0$) we found that DPCTGAN had better \textit{utility}, but PATECTGAN may still be better in terms of \textit{statistical similarity}.

2. Computational tradeoff. \textit{Our highest performant GANs were slow, and MWEM is fast.} PATECTGAN and DPCTGAN, while being our most performant synthesizers, were also the slowest to train. With GANs, more computation often correlates with higher performance \citep{lucic2018gans}. On categorical data, MWEM performed competitively, and is significantly faster to train in any domain.

3. Using AUC-ROC and F1 Score. \textit{One should calculate both, especially to best understand QUAIL's tradeoffs.} Our highest performing models by F1 Score often had QUAIL enhancements, which sometimes, but not always, detrimentally affected AUC-ROC. Without both metrics, one risks using a QUAIL enhancement for a model with high training accuracy that struggles to generalize.

4. Using pMSE. \textit{pMSE can be used alongside ML utility metrics to balanced experiments.} pMSE concisely captures \textit{statistical similarity}, and allows practitioners to easily balance \textit{utility} against the distributional quality of their synthetic data.

5. Enhancing with QUAIL. \textit{QUAIL's effectiveness depends far more on the quality of the embedded differentially private classifier than on the synthesizer.} QUAIL showed promising results in almost all the scenarios we evaluated. Given confidence in the embedded ``vanilla'' differentially private classifier, QUAIL can be used regularly to improve the utility of DP synthetic data.
    
6. Reservations for use in applied scenarios. \textit{Applied DP for ML is hard, thanks to scale and dimensionality.} Applied scenarios we presented assessed large datasets, leading to high computational costs that makes tuning performance difficult. Dimensionality is tricky to deal with in large, sparse, imbalanced private applied scenarios (like we faced with internal datasets). Practitioners may want to investigate differentially private feature selection or dimensionality reductions before training. We are aware of work being done to embed autoencoders into differentially private synthesizers, and view this a promising approach \citep{nguyen2020autogan}.

\section{Conclusion}
\label{section:Conclusion}
With this paper, we set out to assess the efficacy of differentially private synthetic data for use on machine learning tasks. We surveyed an histogram based approach (MWEM) and four differentially private GANs for data synthesis (DPGAN, PATE-GAN, DPCTGAN and PATECTGAN). We evaluated each approach using an extensive benchmarking pipeline. We proposed and evaluated QUAIL, a straightforward method to enhance synthetic data \textit{utility} in ML tasks. We reported on results from two applied internal machine learning scenarios. Our experiments favored PATECTGAN when the privacy budget $\epsilon\geq3.0$, and DPCTGAN when the privacy budget $\epsilon\leq1.0$. We discussed nuances of domain-based tradeoffs and offered takeaways across current methods of model selection, training and benchmarking. As of writing, our experiments represent one of the largest efforts at benchmarking differentially private synthetic data, and demonstrates the promise of this approach when tackling private real-world machine learning problems.

\subsection{Acknowledgements}
The authors would like to thank Soundar Srinivasan and Vijay Ramani of the Microsoft AI Development and Acceleration program for feedback throughout the project. The authors are also greatful to James Honaker and the rest of Harvard's OpenDP team for their continued support. We would further like to thank Christian Arnold, Marcel Neunhoeffer and Sergey Yekhanin for insightful discussions.

\bibliography{iclr2021_conference}
\bibliographystyle{iclr2021_conference}

\appendix
\section{Appendix}
\section{Methods}
 \subsection{DP-SGD detailed steps}
The detailed training steps are as follows:
\begin{enumerate}
    \item A batch of random samples is taken and the gradient for each sample is computed 
    \item For each computed gradient $g$, it is clipped to $g/max(1,\frac{\left\| g \right\|_{2}}{C})$, where $C$ is a clipping bound hyperparameter. 
    \item A Gaussian noise ($\mathcal{N}(0,\sigma^2C^2I)$) (where $\sigma$ is the noise scale) is added to the clipped gradients and the model parameters are updated. 
    \item Finally, the overall privacy cost $(\epsilon,\delta)$ is computed using a privacy accountant method.
\end{enumerate}

 \subsection{DPCTGAN and PATECTGAN Architectures}
\begin{figure*}[h]
  \includegraphics[width=\linewidth]{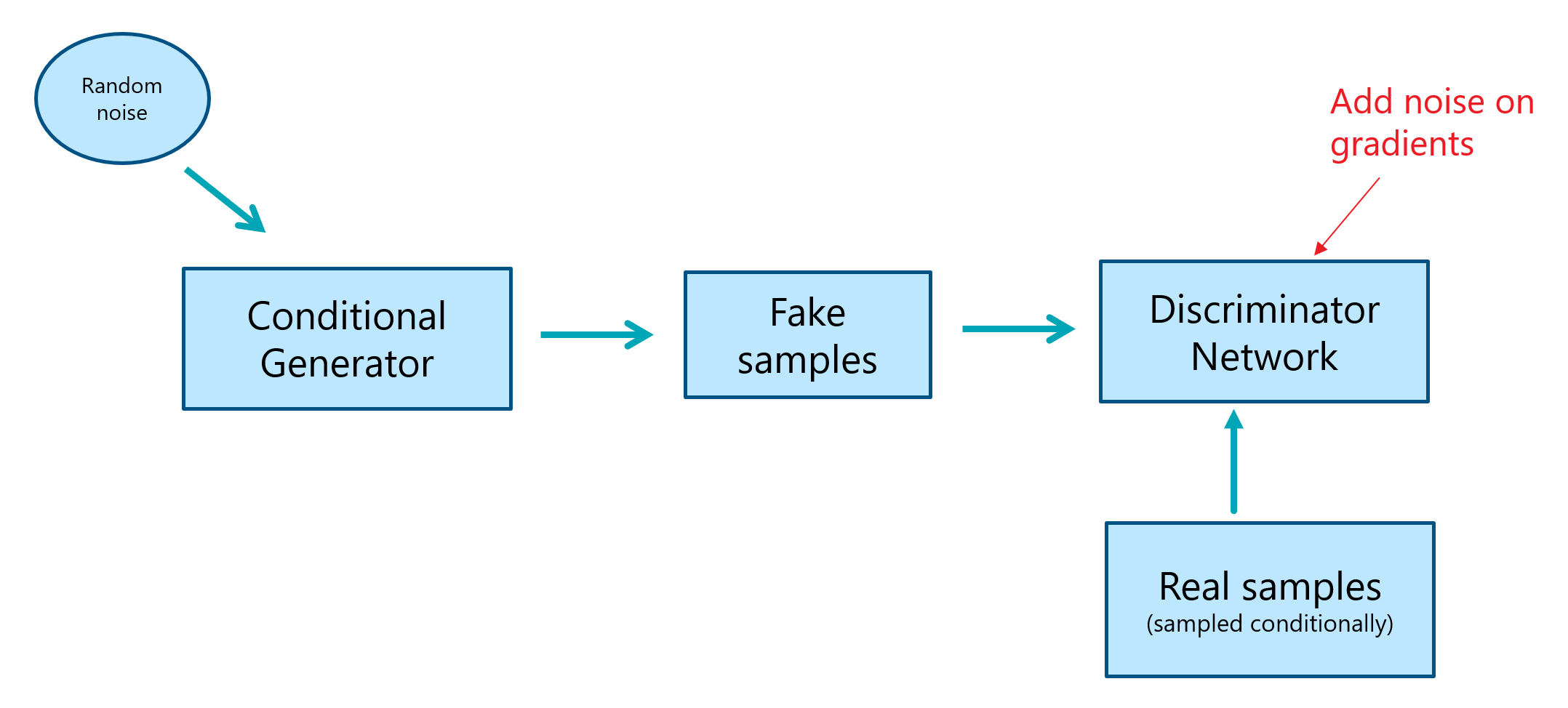}
  \caption{Block diagram of DP-CTGAN model. It uses CTGAN framework. Noise was added to discriminator to preserver differential privacy  }
  \label{fig:dpctgan}
 \end{figure*}

 \begin{figure*}[h]
  \includegraphics[width=\linewidth]{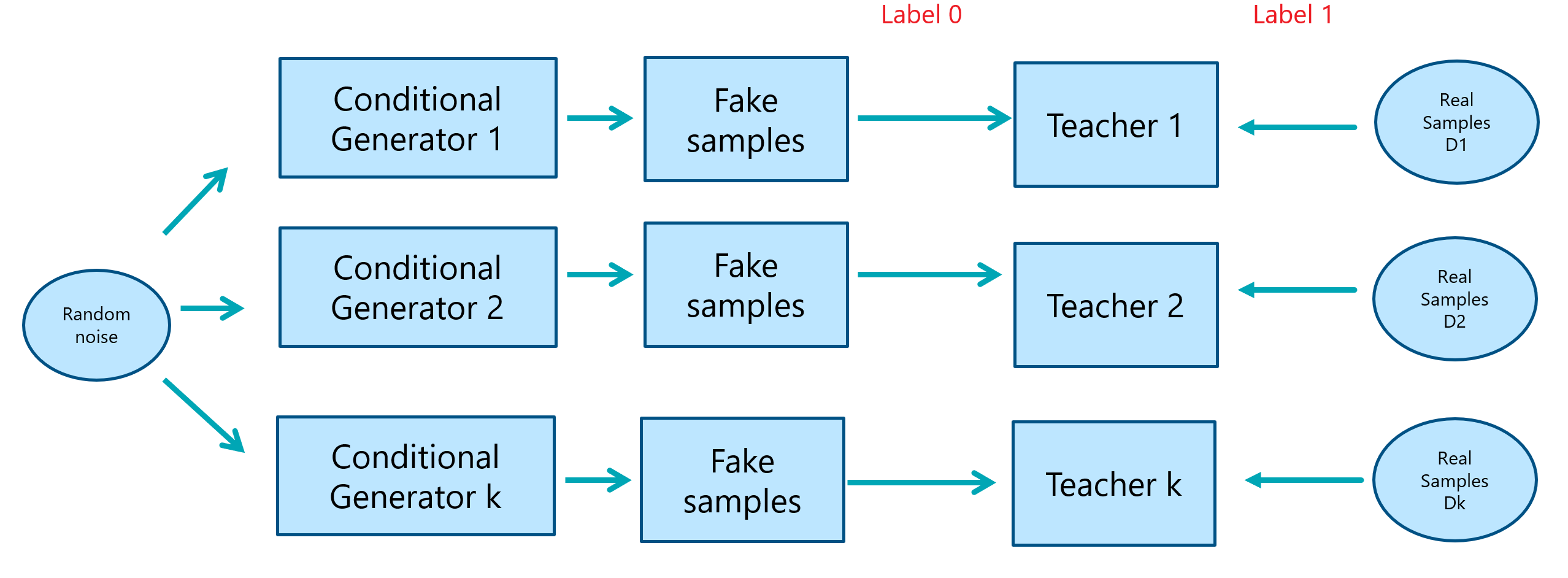}
  \caption{Teacher Discriminator of PATE-CTGAN model. The data was partitioned to k subsets of data. K teacher discriminator an k conditional generator was initialized. The teacher discriminators aims to classify real samples vs. fake samples generated by generators}
  \label{fig:patectgan}
 \end{figure*}
 
 \subsection{Descriptions of Metrics: F1-Score, AUC-ROC and SRA}
 \textit{F1-score} measures the accuracy of a classifier, essentially calculating the mean between precision and recall and favoring the lower of the two. It varies between 0 and 1, where 1 is perfect performance. 
 
 \textit{AUC-ROC}: Area Under the Receiver Operating Characteristic (AUC-ROC) represents the Receiver Operating Characteristic curve in a single number between 0 and 1. This provides insight into the true positive vs. false positive rate of the classifier.

\textit{SRA}: SRA can be thought of as the probability that a comparison between any two algorithms on the synthetic data will be similar to comparisons of the same two algorithms on the real data. SRA compares train-synthetic test-real (i.e. TSTR, which uses differentially private synthetic data to train the classifier, and real data to test) with train-real test-real (TRTR, which uses differentially private synthetic data to train and test the classifier)

\textit{Further Motivation}
Machine learning practitioners often need a deep understanding of data in order to train predictive models. That can be incredibly difficult when data is private. Training one-off, blackbox ``vanilla'' DP classifiers cannot be retrained, as this risks individual privacy, making parameter tuning and feature selection incredibly difficult with these models. Differentially private synthetic data allows practitioners to treat data normally, without further privacy considerations, giving them an opportunity to fine tune their models.
\section{QUAIL}
\subsection{QUAIL Further Details}
We evaluated with a few vanilla differentially private classifiers $C(R,\epsilon_C,r')$:
\begin{enumerate}
    \item Logistic Regression classifier with differential privacy. \citep{chaudhuri2011differentially, diffprivlib} 
    \item Gaussian Naive Bayes with differential privacy. \citep{vaidya2013differentially, diffprivlib}
    \item Multi-layer Perceptron (Neural Network) with differential privacy. \citep{abadi2016deep}
\end{enumerate}

\begin{theorem}
Standard Composition Theorem \citep{dwork2014algorithmic}
Let $M_1: N^{|X|} \rightarrow R_1$ be an $\epsilon_1$-differentially private algorithm, and let $M_2: N^{|X|} \rightarrow R_1$ be $\epsilon_2$-differentially private algorithm. Then their combination, defined to be $M_{1,2}\rightarrow R_1XR_2$ by the mapping: $M_{1,2}(x)=(M_1(x),M_2(x))$ is $\epsilon_1+\epsilon_2$-differentially private.
\end{theorem}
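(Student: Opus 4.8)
The plan is to prove the theorem directly from the definition of pure $\epsilon$-differential privacy given in Equation~(\ref{dp}), by reducing the event-level guarantee to a pointwise ratio bound on outcome probabilities and then exploiting the independence of the two mechanisms' internal randomness to factor that bound into the product of the two individual guarantees. I would first make explicit the standing assumption that $M_1$ and $M_2$ draw their random coins independently, since this is precisely what makes the joint law factorize and is indispensable to the argument as stated.

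First I would fix a pair of neighboring datasets $D, \hat{D}$ differing on a single entry, and a single outcome $(r_1, r_2) \in R_1 \times R_2$. By independence, the joint law of $M_{1,2}(x) = (M_1(x), M_2(x))$ satisfies $\Pr[M_{1,2}(D) = (r_1,r_2)] = \Pr[M_1(D)=r_1]\,\Pr[M_2(D)=r_2]$, and identically with $\hat{D}$ in place of $D$. Forming the ratio of the joint probability on $D$ against that on $\hat{D}$ therefore splits cleanly into the product of the two marginal ratios $\frac{\Pr[M_1(D)=r_1]}{\Pr[M_1(\hat{D})=r_1]} \cdot \frac{\Pr[M_2(D)=r_2]}{\Pr[M_2(\hat{D})=r_2]}$.

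Next I would invoke the hypotheses: $M_1$ being $\epsilon_1$-differentially private bounds the first factor by $e^{\epsilon_1}$ and $M_2$ being $\epsilon_2$-differentially private bounds the second by $e^{\epsilon_2}$, so their product is at most $e^{\epsilon_1+\epsilon_2}$. To recover the set-based definition, I would lift this pointwise inequality to an arbitrary measurable event $S \subseteq R_1 \times R_2$ by summing (or, for continuous ranges, integrating) the relation $\Pr[M_{1,2}(D)=(r_1,r_2)] \le e^{\epsilon_1+\epsilon_2}\,\Pr[M_{1,2}(\hat{D})=(r_1,r_2)]$ over all $(r_1,r_2) \in S$, yielding $\Pr[M_{1,2}(D) \in S] \le e^{\epsilon_1+\epsilon_2}\,\Pr[M_{1,2}(\hat{D}) \in S]$, which is exactly the claimed $(\epsilon_1+\epsilon_2)$-differential privacy.

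The main obstacle is the measure-theoretic passage from the pointwise (per-outcome) statement to the event-based definition when $R_1$ and $R_2$ are continuous, where $\Pr[M_1(D)=r_1]$ is typically zero at every point. There one cannot reason about individual outcome probabilities and must instead work with densities relative to a common base measure, or argue directly on events using a product-measure and Fubini-type decomposition of $\Pr[M_{1,2}(D) \in S]$ into an integral of $M_2$-probabilities against the law of $M_1$. Handling this cleanly — and being careful that the independence assumption is what licenses the product-measure step — is the one place where the otherwise routine inequality-chasing requires genuine attention.
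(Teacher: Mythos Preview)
Your proposal is correct and follows essentially the same approach as the paper's proof: fix neighboring inputs, factor the joint outcome probability via independence of the two mechanisms, and bound each marginal ratio by $e^{\epsilon_i}$ to obtain the product bound $e^{\epsilon_1+\epsilon_2}$. If anything you are more careful than the paper, which stops at the pointwise ratio bound and does not explicitly lift to arbitrary events $S$ or flag the independence assumption and the continuous-range subtlety you raise.
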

\begin{proof}
Let $x, y \in N^{|X|}$ be such that $||x-y||_1 < 1$. Fix any $(r_1,r_2)\in R_1XR_2$. Then:
\begin{align}
    \frac{Pr[M_{1,2}(x)=(r_1,r_2)]}{Pr[M_{1,2}(y)=(r_1,r_2)]}&=\frac{Pr[M_{1}(x)=r_1]Pr[M_{2}(x)=r_2]}{Pr[M_{1}(y)=r_1]Pr[M_{2}(y)=r_2]} \\
    &=(\frac{Pr[M_{1}(x)=r_1]}{Pr[M_{1}(y)=r_1]}) (\frac{Pr[M_{2}(x)=r_2]}{Pr[M_{2}(y)=r_2]}) \\
    &\leq exp(\epsilon_1)exp(\epsilon_2) \\
    &= exp(\epsilon_1 + \epsilon_2)
\end{align}
By symmetry,  $\frac{Pr[M_{1,2}(x)=(r_1,r_2)]}{Pr[M_{1,2}(y)=(r_1,r_2)]} \geq exp(-(\epsilon_1+\epsilon_2))$
\end{proof}
\begin{proof}
\textit{QUAIL: full proof of differential privacy}
Let the first $(\epsilon, \delta)$-differentially private mechanism $M_1: N^{|X|} \rightarrow R_1$ be $C(R,\epsilon_C,r')$. Let the second $(\epsilon, \delta)$-differentially private mechanism $M_2: N^{|X|} \rightarrow R_2$ be $M(R_M,\epsilon_M)$. Fix $0 < p < 1$, then by construction, with original budget $B=\epsilon$,
\begin{align}\label{dp}
    B = \epsilon &= p * \epsilon + (1-p) * \epsilon \\
    \epsilon_M &= p * \epsilon \\
    \epsilon_C &= (1-p) * \epsilon \\
    &\text{By the \textit{Standard Composition Theorem}} \\
    \frac{Pr[M_{C,M}(x)=(r_1,r_2)]}{Pr[M_{C,M}(y)=(r_1,r_2)]}&\geq exp(-(\epsilon_M + \epsilon_C)) \\
    \frac{Pr[M_{C,M}(x)=(r_1,r_2)]}{Pr[M_{C,M}(y)=(r_1,r_2)]}&\geq exp(-(B))
\end{align}
\end{proof}

\subsection{QUAIL Full Results}
\begin{figure}[h!]
\centering
  \begin{minipage}[b]{0.3\linewidth}
\includegraphics[width=1.0\linewidth]{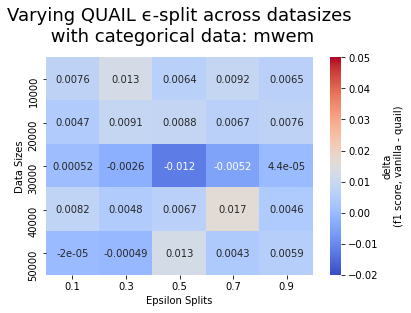}
  \caption{Budget $\epsilon = 1.0$}
  \label{fig:quail_mwem_1}
  \end{minipage}
  \begin{minipage}[b]{0.3\linewidth}
  \includegraphics[width=1.0\linewidth]{images/quail_results/mwem_3.png}
  \caption{Budget $\epsilon = 3.0$}
  \label{fig:quail_mwem_3}
  \end{minipage}
  \begin{minipage}[b]{0.3\linewidth}
  \includegraphics[width=1.0\linewidth]{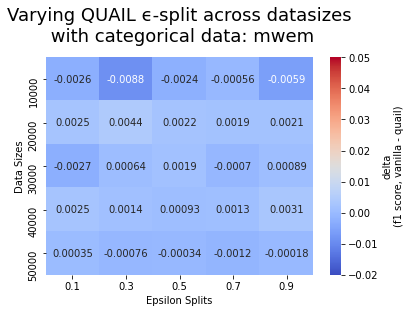}
  \caption{Budget $\epsilon = 10.0$}
  \label{fig:quail_mwem_10}
  \end{minipage}
\end{figure}

\begin{figure}[h!]
\centering
  \begin{minipage}[b]{0.3\linewidth}
\includegraphics[width=1.0\linewidth]{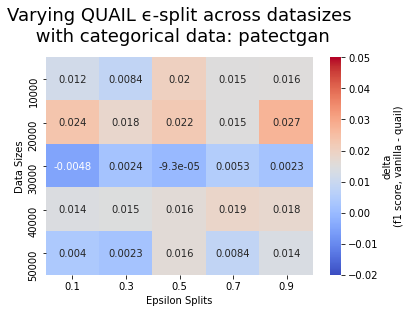}
  \caption{Budget $\epsilon = 1.0$}
  \label{fig:quail_patectgan_1}
  \end{minipage}
  \begin{minipage}[b]{0.3\linewidth}
  \includegraphics[width=1.0\linewidth]{images/quail_results/patectgan_3.png}
  \caption{Budget $\epsilon = 3.0$}
  \label{fig:quail_patectgan_3}
  \end{minipage}
  \begin{minipage}[b]{0.3\linewidth}
  \includegraphics[width=1.0\linewidth]{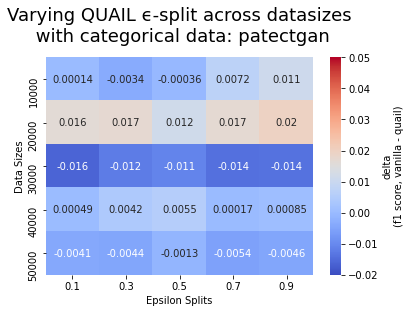}
  \caption{Budget $\epsilon = 10.0$}
  \label{fig:quail_patectgan_10}
  \end{minipage}
\end{figure}

\begin{figure}[h!]
\centering
  \begin{minipage}[b]{0.3\linewidth}
\includegraphics[width=1.0\linewidth]{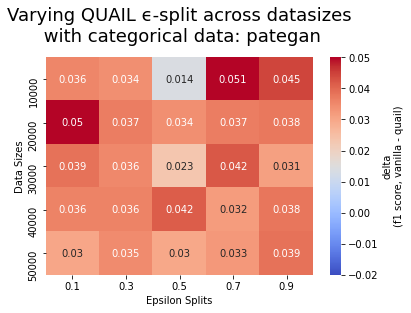}
  \caption{Budget $\epsilon = 1.0$}
  \label{fig:quail_pategan_1}
  \end{minipage}
  \begin{minipage}[b]{0.3\linewidth}
  \includegraphics[width=1.0\linewidth]{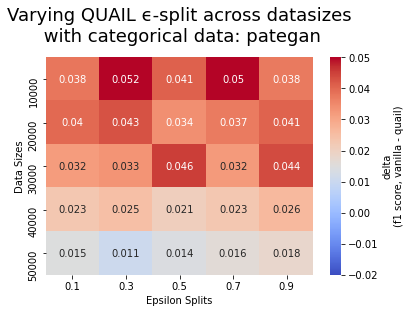}
  \caption{Budget $\epsilon = 3.0$}
  \label{fig:quail_pategan_3}
  \end{minipage}
  \begin{minipage}[b]{0.3\linewidth}
  \includegraphics[width=1.0\linewidth]{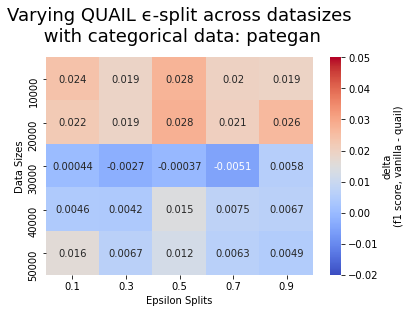}
  \caption{Budget $\epsilon = 10.0$}
  \label{fig:quail_pategan_10}
  \end{minipage}
\end{figure}

\begin{figure}[h!]
\centering
  \begin{minipage}[b]{0.3\linewidth}
\includegraphics[width=1.0\linewidth]{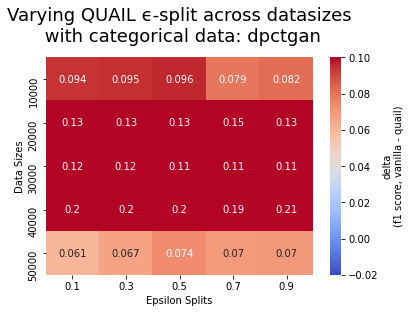}
  \caption{Budget $\epsilon = 1.0$}
  \label{fig:quail_dpctgan_1}
  \end{minipage}
  \begin{minipage}[b]{0.3\linewidth}
  \includegraphics[width=1.0\linewidth]{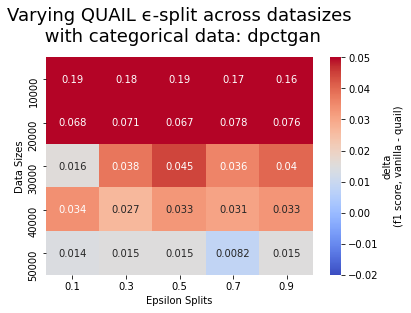}
  \caption{Budget $\epsilon = 3.0$}
  \label{fig:quail_dpctgan_3}
  \end{minipage}
  \begin{minipage}[b]{0.3\linewidth}
  \includegraphics[width=1.0\linewidth]{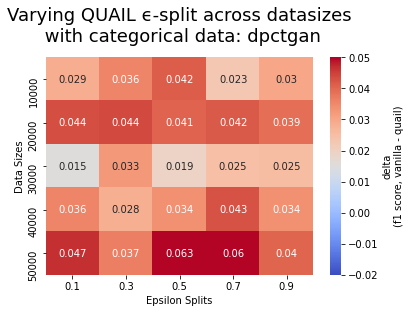}
  \caption{Budget $\epsilon = 10.0$}
  \label{fig:quail_dpctgan_10}
  \end{minipage}
\end{figure}

\begin{figure}[h!]
\centering
  \begin{minipage}[b]{0.3\linewidth}
\includegraphics[width=1.0\linewidth]{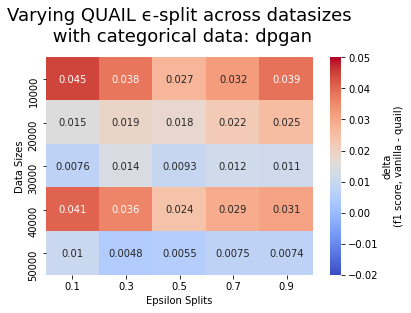}
  \caption{Budget $\epsilon = 1.0$}
  \label{fig:quail_dpgan_1}
  \end{minipage}
  \begin{minipage}[b]{0.3\linewidth}
  \includegraphics[width=1.0\linewidth]{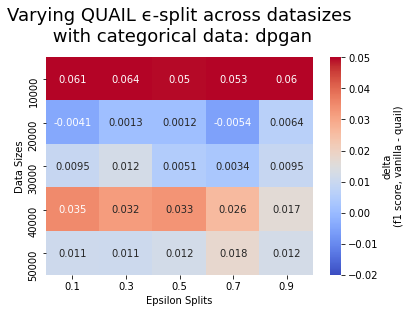}
  \caption{Budget $\epsilon = 3.0$}
  \label{fig:quail_dpgan_3}
  \end{minipage}
  \begin{minipage}[b]{0.3\linewidth}
  \includegraphics[width=1.0\linewidth]{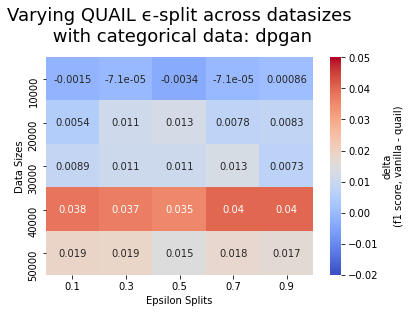}
  \caption{Budget $\epsilon = 10.0$}
  \label{fig:quail_dpgan_10}
  \end{minipage}
\end{figure}
\section{Evaluations}
\subsection{Description of Infrastructure}
Our experimental pipeline provides an extensible interface for loading datasets from remote hosts, specifically from the UCI ML Dataset repository \citep{Dua}. For each evaluation dataset, we launch a process that synthesizes datasets for each privacy budget ($\epsilon$s) specified on each synthesizer specified. Once the synthesis is complete, the pipeline launches a secondary process that analyzes the synthetic data, training classifiers and running the previously mentioned novel metrics. The run is launched, and the results are logged, using MLFlow runs \citep{zaharia2018accelerating} with an Azure Machine Learning compute-cluster backend. Our compute used CPU nodes $STANDARD\_NC24r$ (24 Cores, 224 GB RAM, 1440 GB Disk) and GPU nodes $GPU$ (4 x NVIDIA Tesla K80). We highly encourage future work into hyperparameter tuning for differentially private machine learning tasks, and believe our evaluation pipeline could be of some use in that effort.

\subsection{Details on Data}
\begin{figure}
\centering
  \includegraphics[width=0.5\linewidth]{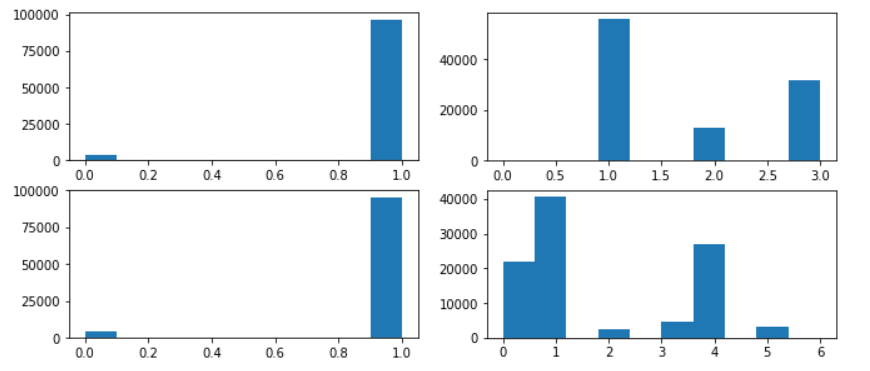}
  \caption{Data distribution of the internal dataset for various attributes. Included to highlight the imbalanced nature, difficulty of supervised learning problem.}
  \label{fig:internal-dist}
\end{figure}

\begin{table}
\begin{tabular}{|p{0.15\linewidth}|p{0.1\linewidth}|p{0.1\linewidth}|p{0.1\linewidth}|p{0.1\linewidth}|p{0.2\linewidth}|p{0.1\linewidth}|}
\hline
Dataset Name                                    & Samples & Continuous Features & Categorical Features & Total Features & Class Distributions                                                              & UCI Link                                                                               \\ \hline
Adult                                           & 48842   & 6                   & 8                            & 14             & 24.78\% positive (binary imbalanced)                                             & \href{http://archive.ics.uci.edu/ml/datasets/Adult}{UCI}                                           \\ \hline
Bank Marketing                                  & 45211   & 8                   & 12                           & 20             & N/A (binary)                                                                     & \href{https://archive.ics.uci.edu/ml/datasets/Bank+Marketing#}{UCI}                               \\ \hline
Car Evaluation                                  & 1728    & 0                   & 6                            & 6              & 0 - 70.023 \%, 1 - 22.222 \%, 2 - 3.993 \%, 3 - 3.762 \% (multiclass imbalanced) & \href{https://archive.ics.uci.edu/ml/datasets/Car+Evaluation}{UCI}                                 \\ \hline
Mushroom                                        & 8124    & 0                   & 22                           & 22             & 51.8\% positive (binary balanced)                                                & \href{https://archive.ics.uci.edu/ml/datasets/Mushroom}{UCI}                                       \\ \hline
Online Shoppers Purchasing Intention (Shopping) & 12330   & 10                  & 8                            & 18             & 84.5\% negative (binary imbalanced)                                              & \href{https://archive.ics.uci.edu/ml/datasets/Online+Shoppers+Purchasing+Intention+Dataset#}{UCI} \\ \hline

\end{tabular}
\caption{Details on Public Datasets used for benchmarking.}
\end{table}

 \begin{figure}
\begin{subfigure}{.5\textwidth}
  \centering
  \includegraphics[width=1\linewidth]{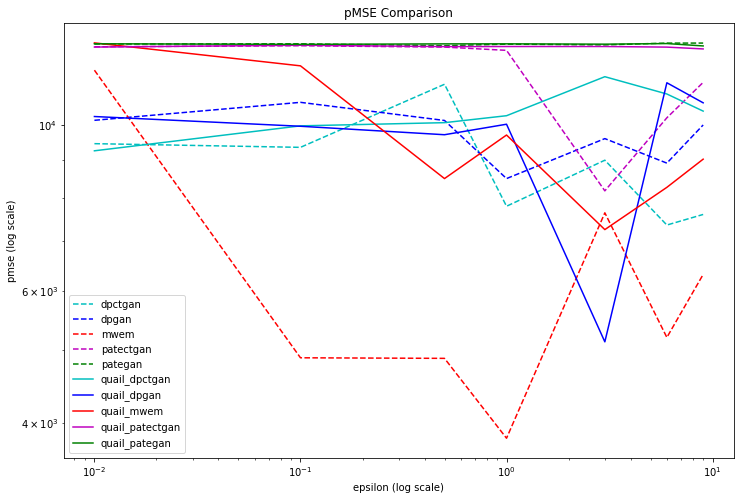}  
  \caption{pMSE Results: Internal Classification}
  \label{fig:internalcls2}
\end{subfigure}
\begin{subfigure}{.5\textwidth}
  \centering
  \includegraphics[width=1\linewidth]{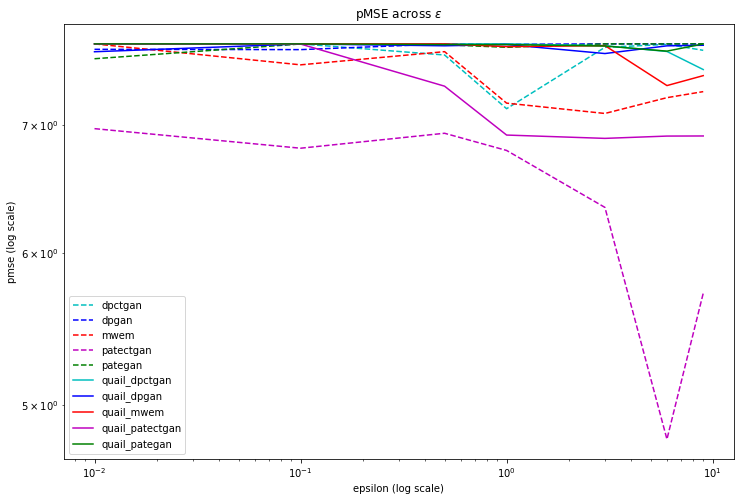}  
  \caption{pMSE Results: Internal Regression}
  \label{fig:internalreg2}
\end{subfigure}
\caption{pMSE evaluation results for internal dataset. PATECTGAN performed best in both cases.}
\label{fig:internal-pms}
\end{figure}


\begin{figure}
  \begin{subfigure}{0.5\textwidth}
  \centering
    \includegraphics[width=1\linewidth]{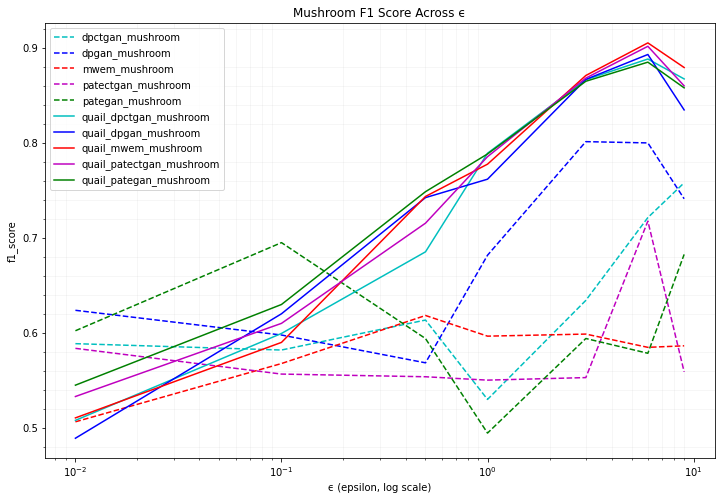}
    \caption{Real Mushroom F1 Score: 0.98}
    \label{fig:f1_score_mushroom}
  \end{subfigure}
  \begin{subfigure}{0.5\textwidth}
  \centering
    \includegraphics[width=1\linewidth]{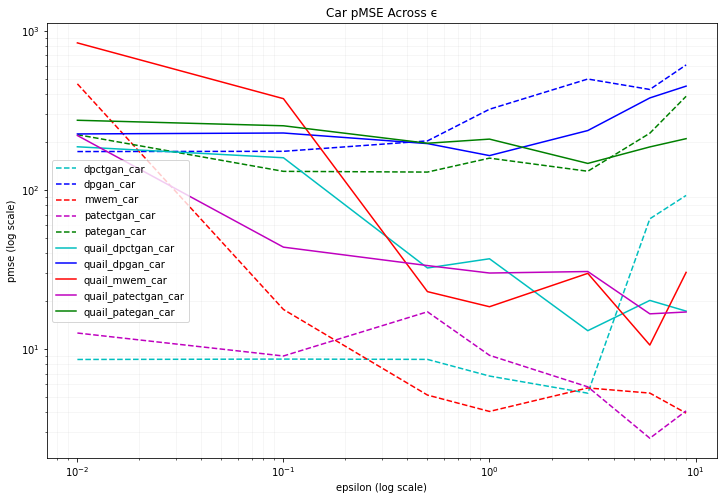}
    \caption{Car pMSE}
    \label{fig:pmse_car}
  \end{subfigure}
  \caption{PATECTGAN demonstrated better performance at higher epsilons. QUAIL synthesizers performed best at low epsilon privacy values.}
\label{fig:first}
\end{figure}

\begin{figure}
  \begin{subfigure}[b]{0.5\linewidth}
  \centering
    \includegraphics[width=1\linewidth]{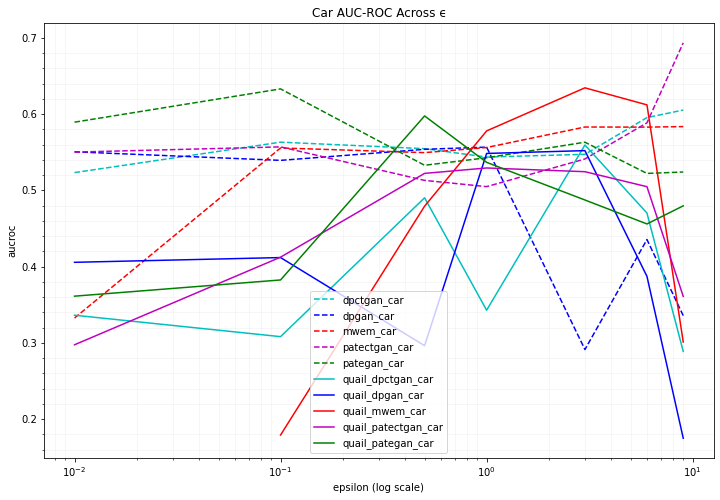}
    \label{fig:aucroc_car}
    \caption{Real Car AUC-ROC: 0.99}
  \end{subfigure}
  \begin{subfigure}[b]{0.5\linewidth}
  \centering
    \includegraphics[width=1\linewidth]{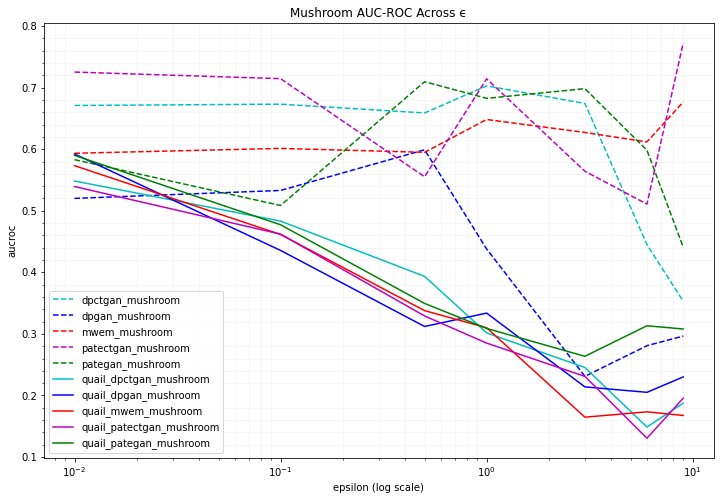}
    \label{fig:aucroc_mushroom}
    \caption{Real Mushroom AUC-ROC: 0.02}
  \end{subfigure}
  \caption{Mushrooms AUC-ROC curve demonstrated that the QUAIL synthesizers might not generalize particularly well.}
\end{figure}

\begin{figure}
  \begin{subfigure}[b]{0.5\linewidth}
  \centering
    \includegraphics[width=1\linewidth]{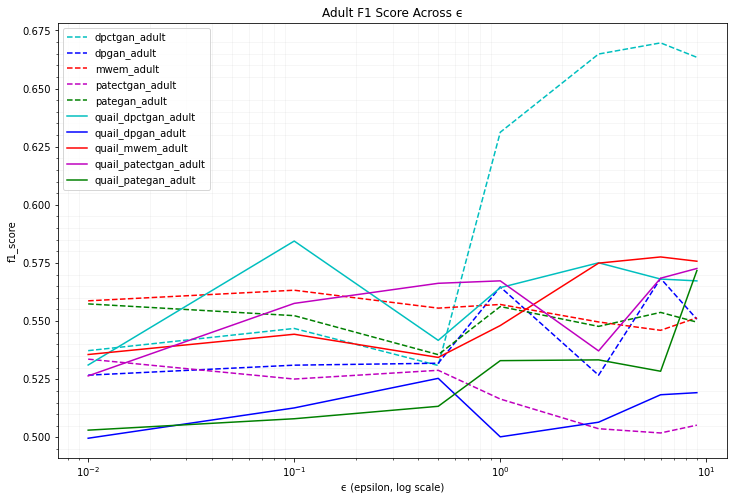}
    \caption{Real Adult F1 Score: 0.89}
    \label{fig:f1_score_adult}
  \end{subfigure}
  \begin{subfigure}[b]{0.5\linewidth}
  \centering
    \includegraphics[width=1\linewidth]{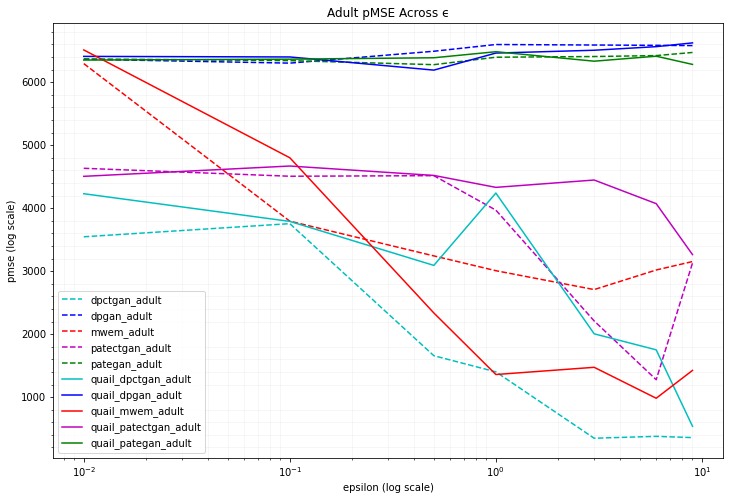}
    \caption{Adult pMSE}
    \label{fig:pmse_adult}
  \end{subfigure}
  \caption{DPCTGAN outperformed other synthesizers by significant margins with Adult, both in terms of ML utility and statistical similarity.}
\end{figure}

\begin{figure}
  \begin{subfigure}[b]{0.5\linewidth}
  \centering
    \includegraphics[width=1\linewidth]{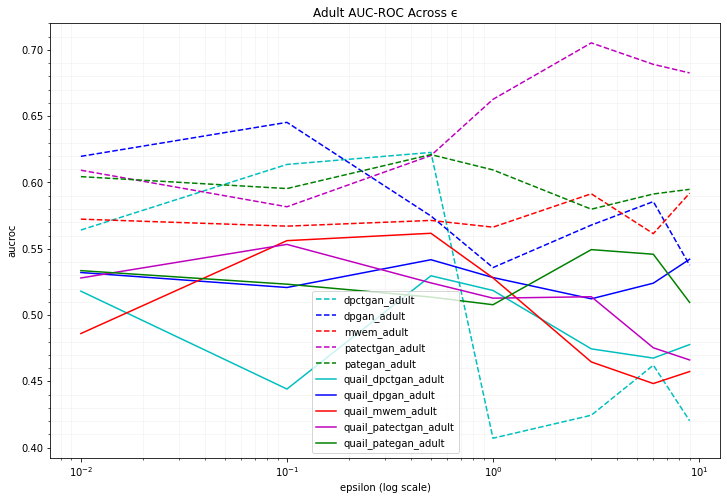}
    \label{fig:aucroc_adult}
    \caption{Real Adult AUC-ROC: 0.31}
  \end{subfigure}
  \begin{subfigure}[b]{0.5\linewidth}
  \centering
    \includegraphics[width=1\linewidth]{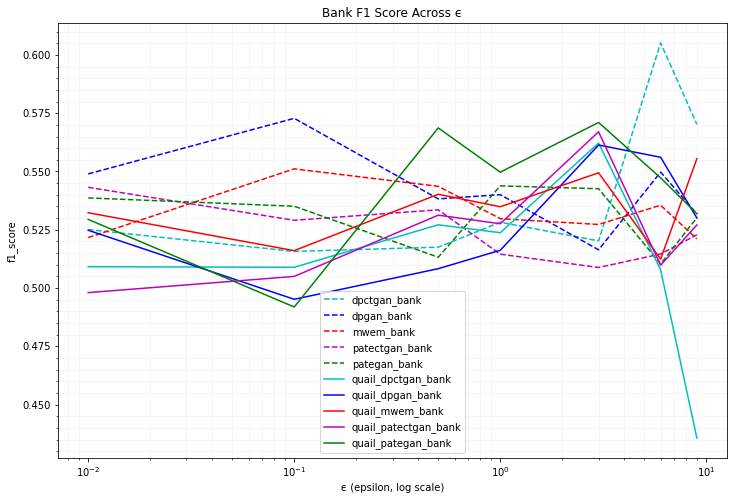}
    \label{fig:f1_score_bank}
    \caption{Real Bank F1 Score: 0.94}
  \end{subfigure}
  \caption{As the most complex benchmark dataset, Bank presented a particular challenge. The results are difficult to interpret, and would require further experimentation to draw conclusions.}
\end{figure}

\begin{figure}
  \begin{subfigure}[b]{0.5\linewidth}
  \centering
    \includegraphics[width=1\linewidth]{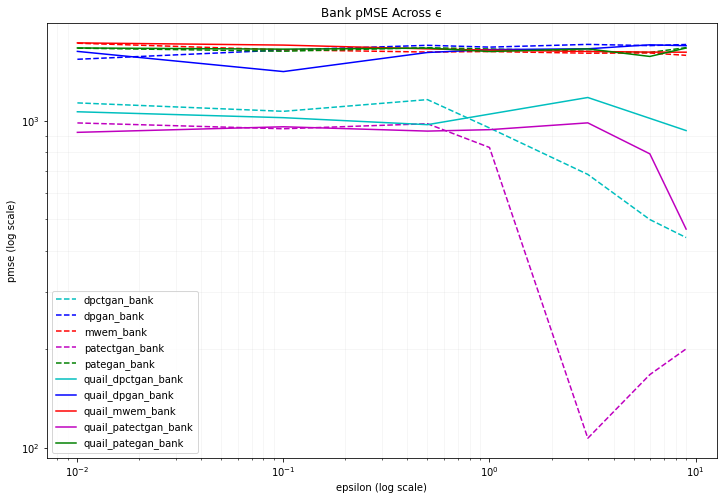}
    \label{fig:pmse_bank}
    \caption{pMSE Bank}
  \end{subfigure}
  \begin{subfigure}[b]{0.5\linewidth}
  \centering
    \includegraphics[width=1\linewidth]{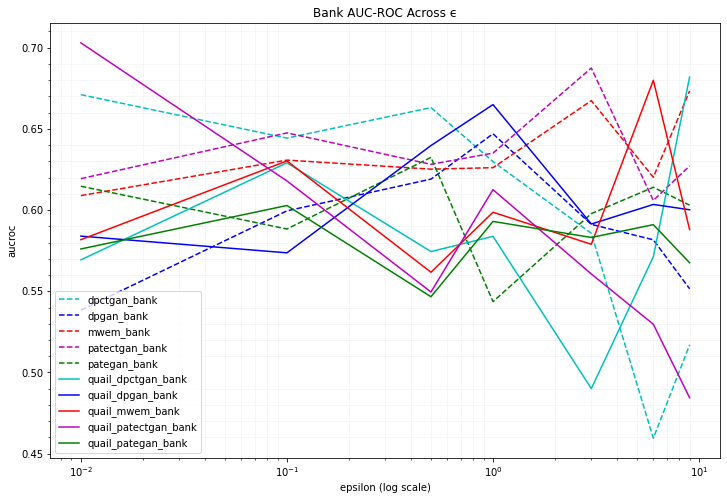}
    \label{fig:aucroc_bank}
    \caption{Real Bank AUC-ROC: 0.08}
  \end{subfigure}
  \caption{Despite the noisy plots, at higher epsilon values, based F1-Scores and this pMSE plot, it does appear as though DPCTGAN and PATECTGAN improved on the other synthesizers.}
\end{figure}

\begin{figure}
  \begin{subfigure}[b]{0.5\linewidth}
  \centering
    \includegraphics[width=1\linewidth]{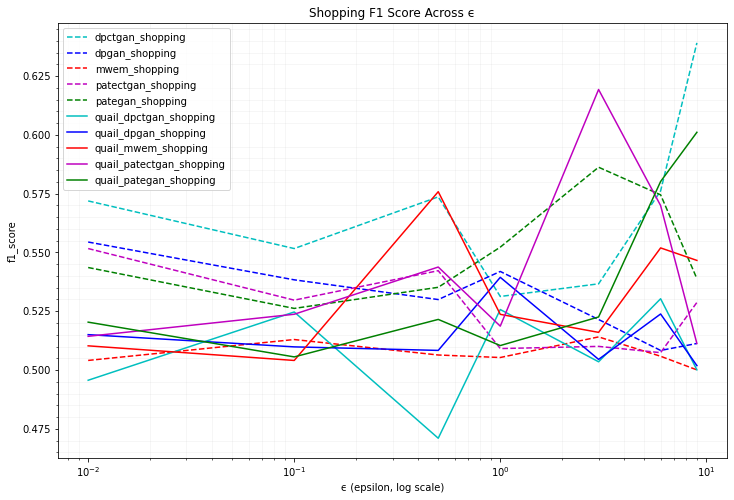}
    \label{fig:f1_score_shopping}
    \caption{Real Data F1 Score: 0.93}
  \end{subfigure}
  \begin{subfigure}[b]{0.5\linewidth}
  \centering
    \includegraphics[width=1\linewidth]{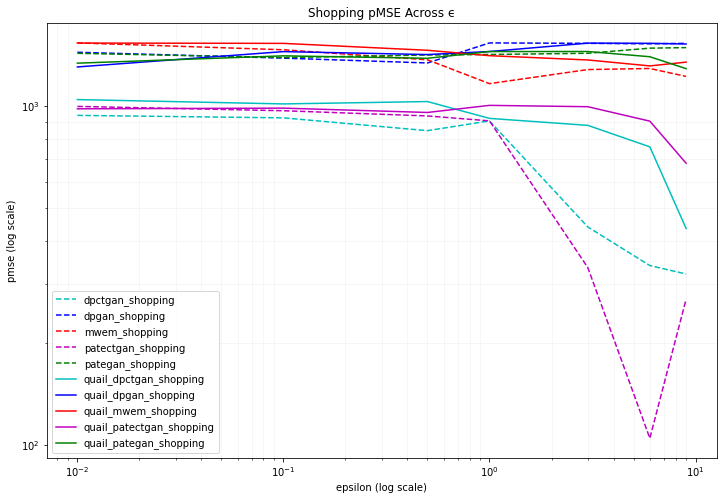}
    \label{fig:pmse_shopping}
    \caption{pMSE Shopping}
  \end{subfigure}
  \caption{We see similar results here to our Bank dataset. Bank and Shopping appear to have been too complex for the synthesizers at the relatively low epsilon budgets provided.}
\end{figure}

\begin{figure}
  \begin{subfigure}[b]{0.5\linewidth}
  \centering
    \includegraphics[width=1\linewidth]{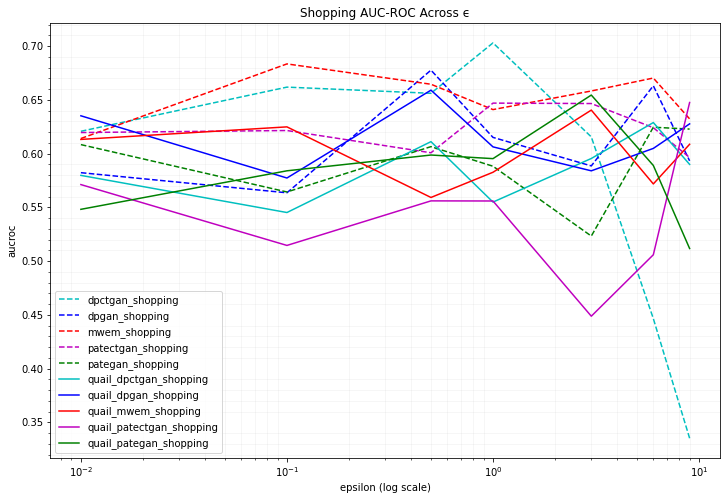}
    \label{fig:aucroc_shopping}
    \caption{Real Data AUC-ROC: 0.09}
  \end{subfigure}

\end{figure}

Results presented on the Public Datasets are averaged across 12 runs. SRA results were moved to the appendix after difficulty interpreting their significance, although there are potential trends that warrant further exploration.
\begin{table}[htb]
    \begin{subtable}[t]{.5\textwidth}
    \setlength\tabcolsep{2pt}%
        \caption{SRA results for MWEM}
        \raggedright
            \begin{tabular}{lrrrrr}
            \toprule
            {} &  bank &  car &  shopping &  mushroom &  adult \\
            epsilons &       &      &           &           &        \\
            \midrule
            0.01     &   0.4 &  0.1 &       0.3 &       0.9 &    0.3 \\
            0.10     &   0.4 &  0.3 &       0.6 &       0.9 &    0.5 \\
            0.50     &   0.7 &  0.2 &       0.6 &       0.7 &    0.6 \\
            1.00     &   0.7 &  0.1 &       0.6 &       0.8 &    0.8 \\
            3.00     &   0.3 &  0.3 &       0.5 &       0.9 &    0.2 \\
            6.00     &   0.5 &  0.3 &       0.6 &       0.9 &    0.2 \\
            9.00     &   0.6 &  0.3 &       0.6 &       0.7 &    0.4 \\
            \bottomrule
            \end{tabular}
            \label{tab:singlebest}
    \end{subtable}%
   \begin{subtable}[t]{.5\textwidth}
   \setlength\tabcolsep{2pt}%
        \raggedleft
        \caption{SRA results for PATEGAN}
        \begin{tabular}{lrrrrr}
        \toprule
        {} &  bank &  car &  shopping &  mushroom &  adult \\
        epsilons &       &      &           &           &        \\
        \midrule
        0.01     &   0.8 &  0.5 &       0.8 &       0.6 &    0.4 \\
        0.10     &   0.7 &  0.4 &       0.7 &       0.3 &    0.8 \\
        0.50     &   0.5 &  0.7 &       0.4 &       0.9 &    0.5 \\
        1.00     &   0.6 &  0.4 &       0.8 &       0.9 &    0.8 \\
        3.00     &   0.8 &  0.7 &       0.7 &       1.0 &    0.7 \\
        6.00     &   0.6 &  0.9 &       0.6 &       0.3 &    0.8 \\
        9.00     &   0.5 &  0.6 &       0.5 &       0.7 &    0.6 \\
        \bottomrule
        \end{tabular}
        \label{tab:twobest}
    \end{subtable}
\end{table}

\begin{table}[htb]
    \begin{subtable}[t]{.5\textwidth}
    \setlength\tabcolsep{2pt}%
        \caption{SRA results for DPGAN}
        \raggedright
            \begin{tabular}{lrrrrr}
            \toprule
            {} &  bank &  car &  shopping &  mushroom &  adult \\
            epsilons &       &      &           &           &        \\
            \midrule
            0.01     &   0.6 &  0.2 &       0.7 &       0.7 &    0.4 \\
            0.10     &   0.4 &  0.1 &       0.7 &       0.7 &    0.7 \\
            0.50     &   0.4 &  0.4 &       0.9 &       0.6 &    0.3 \\
            1.00     &   0.8 &  0.2 &       0.7 &       1.0 &    0.5 \\
            3.00     &   0.9 &  0.4 &       0.2 &       1.0 &    0.9 \\
            6.00     &   0.9 &  0.6 &       0.5 &       0.6 &    0.8 \\
            9.00     &   0.4 &  0.2 &       0.9 &       0.9 &    0.7 \\
            \bottomrule
            \end{tabular}
            \label{tab:singlebest}
    \end{subtable}%
   \begin{subtable}[t]{.5\textwidth}
   \setlength\tabcolsep{2pt}%
        \raggedleft
        \caption{SRA results for PATECTGAN}
        \begin{tabular}{lrrrrr}
        \toprule
        {} &  bank &  car &  shopping &  mushroom &  adult \\
        epsilons &       &      &           &           &        \\
        \midrule
        0.01     &   0.4 &  0.4 &       0.7 &       1.0 &    0.5 \\
        0.10     &   0.7 &  0.3 &       0.5 &       0.9 &    0.5 \\
        0.50     &   0.5 &  0.8 &       0.7 &       0.9 &    0.4 \\
        1.00     &   0.5 &  0.3 &       0.8 &       0.6 &    0.5 \\
        3.00     &   0.7 &  0.4 &       0.5 &       0.8 &    0.3 \\
        6.00     &   0.5 &  0.4 &       0.5 &       1.0 &    0.4 \\
        9.00     &   0.4 &  0.3 &       0.5 &       0.8 &    0.5 \\
        \bottomrule
        \end{tabular}
        \label{tab:twobest}
    \end{subtable}
\end{table}

\begin{table}[htb]
    \begin{subtable}[t]{.5\textwidth}
    \setlength\tabcolsep{2pt}%
        \caption{SRA results for DPCTGAN}
        \raggedright
            \begin{tabular}{lrrrrr}
            \toprule
            {} &  bank &  car &  shopping &  mushroom &  adult \\
            epsilons &       &      &           &           &        \\
            \midrule
            0.01     &   0.5 &  0.1 &       0.8 &       0.9 &    0.0 \\
            0.10     &   0.3 &  0.1 &       0.6 &       0.9 &    0.2 \\
            0.50     &   0.9 &  0.2 &       0.5 &       0.7 &    0.3 \\
            1.00     &   0.4 &  0.2 &       0.5 &       0.9 &    0.0 \\
            3.00     &   0.1 &  0.1 &       0.0 &       0.8 &    0.8 \\
            6.00     &   0.1 &  0.5 &       0.1 &       0.8 &    0.8 \\
            9.00     &   0.0 &  0.5 &       0.2 &       1.0 &    0.7 \\
            \bottomrule
            \end{tabular}
            \label{tab:singlebest}
    \end{subtable}%
   \begin{subtable}[t]{.5\textwidth}
   \setlength\tabcolsep{2pt}%
        \raggedleft
        \caption{SRA results for QUAIL (MWEM)}
        \begin{tabular}{lrrrrr}
        \toprule
        {} &  bank &  car &  shopping &  mushroom &  adult \\
        epsilons &       &      &           &           &        \\
        \midrule
        0.01     &   0.6 &  0.7 &       1.0 &       1.0 &    0.6 \\
        0.10     &   0.8 &  0.5 &       1.0 &       0.4 &    0.4 \\
        0.50     &   0.2 &  0.6 &       0.9 &       0.4 &    0.2 \\
        1.00     &   0.5 &  0.5 &       0.6 &       0.5 &    0.6 \\
        3.00     &   0.5 &  0.5 &       0.3 &       0.4 &    0.6 \\
        6.00     &   0.7 &  0.5 &       0.8 &       0.3 &    0.7 \\
        9.00     &   0.7 &  0.4 &       0.2 &       0.4 &    0.4 \\
        \bottomrule
        \end{tabular}
        \label{tab:twobest}
    \end{subtable}
\end{table}

\begin{table}[htb]
    \begin{subtable}[t]{.5\textwidth}
    \setlength\tabcolsep{2pt}%
        \caption{SRA results for QUAIL (PATEGAN)}
        \raggedright
            \begin{tabular}{lrrrrr}
            \toprule
            {} &  bank &  car &  shopping &  mushroom &  adult \\
            epsilons &       &      &           &           &        \\
            \midrule
            0.01     &   0.6 &  0.3 &       0.7 &       0.3 &    0.8 \\
            0.10     &   0.4 &  0.5 &       0.7 &       0.4 &    0.6 \\
            0.50     &   0.2 &  0.5 &       0.7 &       0.3 &    0.6 \\
            1.00     &   0.7 &  0.5 &       0.8 &       0.3 &    0.2 \\
            3.00     &   0.4 &  0.4 &       0.2 &       0.3 &    0.5 \\
            6.00     &   0.2 &  0.5 &       0.3 &       0.4 &    0.0 \\
            9.00     &   0.6 &  0.5 &       0.6 &       0.4 &    0.4 \\
            \bottomrule
            \end{tabular}
            \label{tab:singlebest}
    \end{subtable}%
   \begin{subtable}[t]{.5\textwidth}
   \setlength\tabcolsep{2pt}%
        \raggedleft
        \caption{SRA results for QUAIL (DPGAN)}
        \begin{tabular}{lrrrrr}
        \toprule
        {} &  bank &  car &  shopping &  mushroom &  adult \\
        epsilons &       &      &           &           &        \\
        \midrule
        0.01     &   0.5 &  0.7 &       0.9 &       1.0 &    0.2 \\
        0.10     &   0.6 &  0.6 &       0.5 &       0.9 &    0.6 \\
        0.50     &   0.2 &  0.5 &       0.5 &       0.5 &    0.3 \\
        1.00     &   0.5 &  0.6 &       0.9 &       0.3 &    0.6 \\
        3.00     &   0.6 &  0.5 &       0.2 &       0.5 &    0.5 \\
        6.00     &   0.6 &  0.5 &       0.3 &       0.5 &    0.8 \\
        9.00     &   0.6 &  0.5 &       0.0 &       0.5 &    0.2 \\
        \bottomrule
        \end{tabular}
        \label{tab:twobest}
    \end{subtable}
\end{table}

\begin{table}[htb]
    \begin{subtable}[t]{.5\textwidth}
    \setlength\tabcolsep{2pt}%
        \caption{SRA results for QUAIL (PATECTGAN)}
        \raggedright
            \begin{tabular}{lrrrrr}
            \toprule
            {} &  bank &  car &  shopping &  mushroom &  adult \\
            epsilons &       &      &           &           &        \\
            \midrule
            0.01     &   0.9 &  0.3 &       0.6 &       0.3 &    0.5 \\
            0.10     &   0.4 &  0.6 &       0.9 &       0.3 &    0.9 \\
            0.50     &   0.7 &  0.5 &       0.5 &       0.3 &    0.6 \\
            1.00     &   0.0 &  0.5 &       0.9 &       0.3 &    0.9 \\
            3.00     &   0.6 &  0.4 &       0.1 &       0.3 &    0.7 \\
            6.00     &   0.7 &  0.4 &       0.8 &       0.4 &    0.9 \\
            9.00     &   0.7 &  0.4 &       0.9 &       0.3 &    0.8 \\
            \bottomrule
            \end{tabular}
            \label{tab:singlebest}
    \end{subtable}%
   \begin{subtable}[t]{.5\textwidth}
   \setlength\tabcolsep{2pt}%
        \raggedleft
        \caption{SRA results for QUAIL (DPCTGAN)}
        \begin{tabular}{lrrrrr}
        \toprule
        {} &  bank &  car &  shopping &  mushroom &  adult \\
        epsilons &       &      &           &           &        \\
        \midrule
        0.01     &   0.8 &  0.2 &       0.9 &       0.5 &    0.2 \\
        0.10     &   0.6 &  0.5 &       0.8 &       0.3 &    0.3 \\
        0.50     &   0.1 &  0.4 &       0.7 &       0.3 &    0.4 \\
        1.00     &   0.7 &  0.5 &       0.5 &       0.3 &    0.4 \\
        3.00     &   0.1 &  0.4 &       0.8 &       0.4 &    0.5 \\
        6.00     &   0.0 &  0.4 &       0.7 &       0.3 &    0.1 \\
        9.00     &   0.3 &  0.4 &       0.1 &       0.5 &    0.1 \\
        \bottomrule
        \end{tabular}
        \label{tab:twobest}
    \end{subtable}
\end{table}

\lstset{language=Python,tabsize=2}
Below is a list of parameters used in training:
\begin{lstlisting}
DPCTGAN
    embedding_dim=128,
    gen_dim=(256, 256),
    dis_dim=(256, 256),
    l2scale=1e-6,
    batch_size=500,
    epochs=300,
    pack=1,
    log_frequency=True,
    disabled_dp=False,
    target_delta=None,
    sigma = 5,
    max_per_sample_grad_norm=1.0,
    verbose=True,
    loss = 'wasserstein'
\end{lstlisting}
\begin{lstlisting}
PATECTGAN
    embedding_dim=128,
    gen_dim=(256, 256),
    dis_dim=(256, 256),
    l2scale=1e-6,
    epochs=300,
    pack=1,
    log_frequency=True,
    disabled_dp=False,
    target_delta=None,
    sigma = 5,
    max_per_sample_grad_norm=1.0,
    verbose=True,
    loss = 'cross_entropy',
    binary=False,
    batch_size = 500,
    teacher_iters = 5,
    student_iters = 5,
    delta = 1e-5
\end{lstlisting}
\begin{lstlisting}
DPGAN
    binary=False,
    latent_dim=64, 
    batch_size=64,
    epochs=1000,
    delta=1e-5
\end{lstlisting}
\begin{lstlisting}
PATEGAN
    binary=False,
    latent_dim=64,
    batch_size=64,
    teacher_iters=5,
    student_iters=5,
    delta=1e-5
\end{lstlisting}

\end{document}